\theoremstyle{plain}
\newtheorem{theorem}{Theorem}[section]
\newtheorem{proposition}[theorem]{Proposition}
\newtheorem{lemma}[theorem]{Lemma}
\newtheorem{corollary}[theorem]{Corollary}
\theoremstyle{definition}
\newtheorem{definition}[theorem]{Definition}
\theoremstyle{remark}
\newtheorem{example}[theorem]{Example}
\newtheorem{fact}[theorem]{Fact}
\newcommand{\E}{\mathop{\mathbb{E}}}
\newcommand{\vx}{\mathbf{x}}
\newcommand{\va}{\mathbf{a}}
\newcommand{\vu}{\mathbf{u}}
\newcommand{\vtheta}{\boldsymbol{\theta}}
\newcommand{\vb}{\mathbf{b}}
\def\epsilon{\varepsilon}
\newcommand{\xhdr}[1]{\vspace{-1mm} \noindent{\bf #1}}
\icmltitlerunning{Strategic Instrumental Variable Regression}
\begin{document}

\twocolumn[
\icmltitle{Strategic Instrumental Variable Regression:\\ Recovering Causal Relationships From Strategic Responses}

% It is OKAY to include author information, even for blind
% submissions: the style file will automatically remove it for you
% unless you've provided the [accepted] option to the icml2022
% package.

% List of affiliations: The first argument should be a (short)
% identifier you will use later to specify author affiliations
% Academic affiliations should list Department, University, City, Region, Country
% Industry affiliations should list Company, City, Region, Country

% You can specify symbols, otherwise they are numbered in order.
% Ideally, you should not use this facility. Affiliations will be numbered
% in order of appearance and this is the preferred way.
\icmlsetsymbol{equal}{*}

\begin{icmlauthorlist}
\icmlauthor{Keegan Harris}{cmu}
\icmlauthor{Daniel Ngo}{equal,umn}
\icmlauthor{Logan Stapleton}{equal,umn}
\icmlauthor{Hoda Heidari}{cmu}
\icmlauthor{Zhiwei Steven Wu}{cmu}
\end{icmlauthorlist}

\icmlaffiliation{cmu}{School of Computer Science, Carnegie Mellon University, Pittsburgh, USA}
%\icmlaffiliation{comp}{Company Name, Location, Country}
\icmlaffiliation{umn}{Computer Science Department, University of Minnesota, Minneapolis, USA}

\icmlcorrespondingauthor{Keegan Harris}{keeganh@cs.cmu.edu}
%\icmlcorrespondingauthor{Firstname2 Lastname2}{first2.last2@www.uk}

% You may provide any keywords that you
% find helpful for describing your paper; these are used to populate
% the "keywords" metadata in the PDF but will not be shown in the document
\icmlkeywords{Machine Learning, ICML}

\vskip 0.3in
]

% this must go after the closing bracket ] following \twocolumn[ ...

% This command actually creates the footnote in the first column
% listing the affiliations and the copyright notice.
% The command takes one argument, which is text to display at the start of the footnote.
% The \icmlEqualContribution command is standard text for equal contribution.
% Remove it (just {}) if you do not need this facility.

%\printAffiliationsAndNotice{}  % leave blank if no need to mention equal contribution
\printAffiliationsAndNotice{\icmlEqualContribution} % otherwise use the standard text.

\begin{abstract}
In settings where Machine Learning (ML) algorithms automate or inform consequential decisions about people, individual decision subjects are often incentivized to strategically modify their observable attributes to receive more favorable predictions. As a result, the distribution the assessment rule is trained on may differ from the one it operates on in deployment.
 While such distribution shifts, in general, can hinder accurate predictions, our work identifies a unique opportunity associated with shifts due to strategic responses: We show that we can use strategic responses effectively to recover \emph{causal} relationships between the observable features and outcomes we wish to predict, even under the presence of unobserved confounding variables. 
 Specifically, our work establishes a novel connection between strategic responses to ML models and instrumental variable (IV) regression by observing that the sequence of deployed models can be viewed as an \emph{instrument} that affects agents' observable features but does not \emph{directly} influence their outcomes.
 We show that our causal recovery method can be utilized to improve decision-making across several important criteria: individual fairness, agent outcomes, and predictive risk. In particular, we show that if decision subjects differ in their ability to modify non-causal attributes, any decision rule
deviating from the causal coefficients can lead to
(potentially unbounded) individual-level unfairness.
\end{abstract}

\section{Introduction}\label{sec:intro}

Machine learning (ML) predictions increasingly inform high-stakes decisions for people in areas such as college admissions~\citep{pangburn2019,Somvichian2021}, credit scoring~\citep{whitecase,rice2013discriminatory}, employment~\citep{sanchez2020does}, and beyond. One of the major criticisms against the use of ML in socially consequential domains is the failure of these technologies to identify \emph{causal} relationships among relevant attributes and the outcome of interest~\citep{kusner2017counterfactual}. The single-minded focus of ML on predictive accuracy has given rise to brittle predictive models that learn to rely on spurious correlations---and at times, harmful stereotypes---to achieve seemingly accurate predictions on held-out test data~\citep{sweeney2013discrimination,kusner2020long}. The resulting models frequently underperform in deployment, and their predictions can negatively impact decision subjects. 
As an example of the long-term negative consequences of ML-based decision-making systems, they often prompt individuals to modify their observable attributes \emph{strategically} to receive more favorable predictions---and subsequently, decisions~\citep{hardt2016strategic}. These strategic responses are among the primary causes of distribution shifts (and subsequently, the unsatisfactory performance) of ML in high-stakes decision-making domains. Moreover, recent work has established the potential of these tools to amplify existing social disparities by \emph{incentivizing different effort investments} across distinct groups of subjects~\citep{liu2020disparate,heidari2019long,mouzannar2019fair}.

The above challenges have led to renewed calls on the ML community to strengthen their understanding of the connections between ML and causality~\citep{pearl2019seven,scholkopf2019causality}. Knowledge of causal relationships among predictive attributes and outcomes of interest promotes several desirable aims: First, ML practitioners can use this knowledge to debug their models and ensure robustness even if the underlying population shifts over time.
Second, policymakers can utilize the causal understanding of a domain in their policy choices and examine a decision-making system's compliance with policy goals and societal values (e.g., they can audit the system for unfairness against particular populations~\citep{loftus2018causal}).   
Finally, predictions rooted in causal associations block undesirable pathways of gaming and manipulation and, instead, encourage decision subjects to make meaningful interventions that improve their actual outcomes (as opposed to their assessments alone).

Our work responds to the above calls by offering a new approach to recover causal relationships between observable features and the outcome of interest in the presence of strategic responses---without substantially hampering predictive accuracy. We consider settings where a decision-maker deploys a sequence of models to predict the outcome for a sequence of strategic decision subjects. Often in high-stakes decision-making settings such as the ones mentioned earlier, there are unobserved confounding variables that influence subjects' attributes and outcomes simultaneously. Our key observation is that we can correct for the effect of such confounders by viewing the sequence of \textbf{assessment rules as valid \emph{instruments}} which affect subjects' observable features but do not \emph{directly} influence their outcomes. Our main contribution is a general framework that recovers the causal relationships between observed attributes and the outcome of interest by treating assessment rules as instruments.
%Armed with this crucial observation, we can run two-stage least squares (2SLS) regression to recover the causal relationships between observable features and outcomes---despite the presence of unobserved confounders. 

\iffalse
\hhdelete{ 
In numerous real-world applications, confounders taint the causal interpretation of data-driven decision-making tools, leading to the unwanted consequences outlined above (e.g., lack of robustness, gaming). For example, in the context of university admissions, research has shown that the socioeconomic background of a student can impact both their SAT scores and success in college~\citep{sackett2009socioeconomic}. In credit lending, lack of access to affordable credit affects not just the applicant's debt but also their likelihood of default~\citep{collard2005affordable}. 
%\hhdelete{In employment, factors such as race or gender can impact a job candidate's educational background and peer evaluations~\citep{}.}
For concreteness, in the remainder of this section, we describe our setting and research problem using a stylized university admissions scenario as the running example. }
\fi

\subsection{Our Setting}

Next, we describe our theoretical setup in further detail, then proceed to an overview of our findings. For concreteness, we utilize a stylized university admissions scenario as our running example for the remainder of this section. However, the reader should note that our model is applicable to other real-world applications in which confounders taint the causal interpretation of predictive models. For example, in credit lending, lack of access to affordable credit affects not only the applicant's debt, but also their likelihood of default~\citep{collard2005affordable}.
In university admissions (which will be our running example), research has shown that the socioeconomic background of a student can impact both their SAT scores and success in college~\citep{sackett2009socioeconomic}. 
%\hhdelete{In employment, factors such as race or gender can impact a job candidate's educational background and peer evaluations~\citep{}.}

With the running example in mind, 
consider a stylized setting in which a university decides whether to admit or reject applicants on a rolling basis\footnote{See \citet{psbwebsite} for a list of such universities in the United States.} based (in part) on how well they are predicted to perform if admitted to the university (See \Cref{fig:intro-dags}). We model such interactions as a game between a \emph{principal} (here, the university) and a population of \emph{agents} (here, university applicants) who arrive sequentially over $T$ rounds, indexed by $t=1,2, \cdots, T$. 
In each round $t$, the principal deploys an {assessment rule $\vtheta_t \in \mathbb{R}^m$, which is used to assign agent $t$ a predicted outcome $\widehat{y}_t \in \mathbb{R}$}. In our running example, $\widehat{y}$ could correspond to the applicant's predicted college GPA if admitted. The predicted outcome is calculated based on certain observable/measured attributes of the agent, denoted by $\vx_t \in \mathbb{R}^m$. For example, in the case of a university applicant, these attributes may include the applicants' standardized test scores, high school math GPA, science GPA, humanities GPA, and their extracurricular activities. For simplicity, we assume all assessment rules are \emph{linear}, that is, $\hat{y}_t = \vx_t^\top \vtheta_t + \hat{o}_t$ for all $t$. (Where $\hat{o}_t$ is the current estimate of the expected offset term $\E[o_t]$.) This linear setup corresponds to an instance of the \emph{partially linear regression model} (originally due to \citet{robinson1988root}), a commonly studied setting in both the causal inference and strategic learning literature (e.g., \citet{shavit2020strategic, kleinberg2020classifiers, BechavodLWZ21}).

\xhdr{Measured vs. latent variables.}
We assume that the agent best-responds to the assessment rule $\vtheta_t$ by strategically modifying their observable attributes $\mathbf{x}_t$ to receive a more favorable predicted outcome. Often agents cannot modify the value of their measured attributes (e.g., SAT score) directly, but only through investing effort in certain activities that are difficult to measure. 
 For example, a student might take standardized test preparation courses to improve their SAT scores, or they may spend time studying the respective subjects to improve their math and humanities GPA. 

\xhdr{Latent variables: effort investments.} We formalize the above hidden investments with a vector $\mathbf{a}_t \in \mathbb{R}^d$, capturing the unobservable efforts agent $t$ invests in $d$ \emph{activities} in response to the assessment rule $\vtheta_t$. We assume there exists a linear mapping $\mathcal{E}_t$ which translates efforts to changes in observable attributes for agent $t$. The $(k,j)$-th entry of this effort conversion matrix defines the change in the $k$-th observable attribute of agent $t$, $\vx_{t}$, for one unit increase in the $j$th coordinate of their effort vector $\mathbf{a}_t$.

\begin{figure*}
    \centering
    \includegraphics[width=\textwidth]{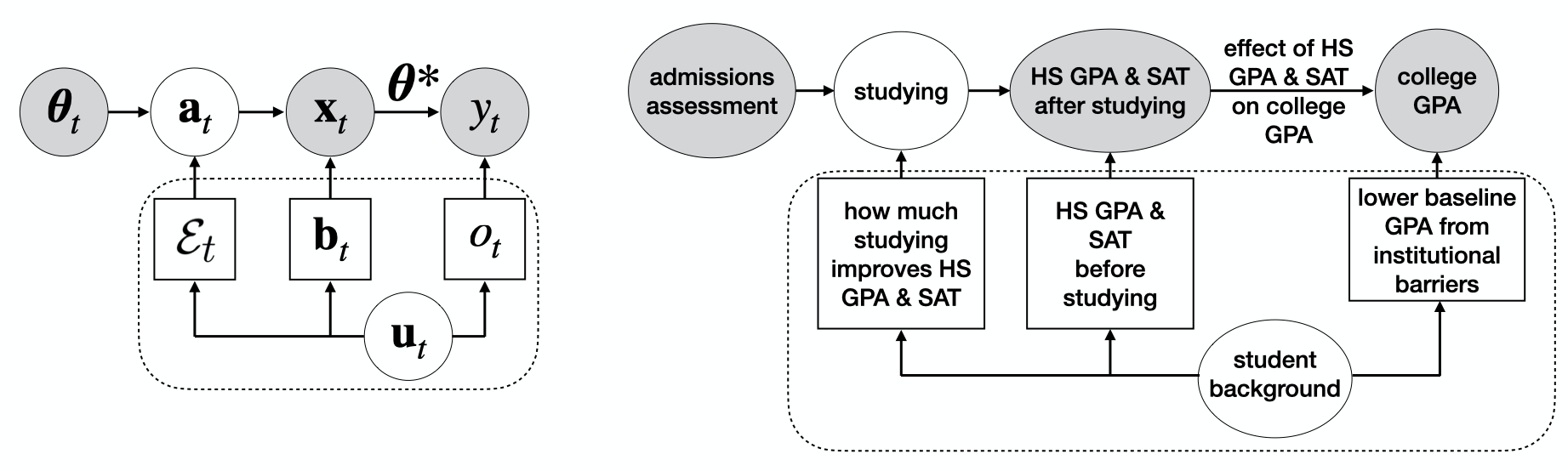}
    \caption{Graphical model for our setting (left) along with the way it corresponds to the admissions running example (right). Grey nodes are observed, white unobserved. Observable features $\vx_t$ (e.g. high school GPA, SAT scores, etc.) depend on both the agent's private type $\vu_t$ (e.g. a student's background) via initial features $\vb_t$ (e.g. the SAT score or HS GPA student $t$ would get without studying) and effort conversion matrix $\mathcal{E}_t$ (e.g. how much studying translates to an increase in SAT score for student $t$) and assessment rule $\vtheta_t$ via action $\va_t$, which could correspond to studying, taking an SAT prep course, etc). An agent's outcome $y_t$ (e.g. college GPA) is determined by their observable features $\vx_t$ (via causal relationship $\vtheta^*$) and type $\vu_t$ (via baseline outcome error term $o_t$, which could be lower for students from underserved groups due to institutional barriers, discrimination, etc).}
    \label{fig:intro-dags}
    
\end{figure*}

\xhdr{Latent variables: agent types.} 
Each agent $t$ has an unobserved private type $\vu_t$ that can impact both their observed attributes $\vx_t$ and true outcomes $y_t$. (The type is the confounder we would like to correct for.) In our running example, the type may broadly refer to the student's relevant background factors that cannot be directly observed or measured. For example, the student's type can specify their socioeconomic background factors (including the level of educational support they receive within their immediate family), as well as their interest and skills in specific subjects such as English or Mathematics.\footnote{Note that later in Section \ref{sec:experiment}, we use the terminology of agent \emph{subpopulations}. Subpopulations are distinct from types in that subpopulations determine the \emph{distribution} of types, but individual agents belonging to the same subpopulation may have different types. We will elaborate on this in Section \ref{sec:experiment}.}
Formally, we assume the type $\vu_t$ characterizes several relevant latent attributes of the agent, which we refer to using the tuple $\vu_t := (\vb_t, \mathcal{E}_t, o_t)$:

\begin{itemize}[noitemsep,nolistsep]
    \item $\vb_t \in \mathbb{R}^m$ specifies agent $t$'s baseline observable attribute values. For example, it can specify the baseline values of high school grades and SAT score the student would have received without any effort spent studying or preparing for standardized tests.
    \item $\mathcal{E}_t$ specifies agent $t$'s \emph{effort conversion matrix}---that is, how various effort investments in unobservable activities translate to changes in observable features. 
    \item $o_t$ summarizes all other environmental factors that can impact the agent's true outcome when we control for observable attributes. For example, it may reflect the effect of the institutional barriers the student faces on their actual college GPA.
\end{itemize}

We assume agent $t$'s observable features are affected by their type and effort investments. In particular, we assume they take the form $\mathbf{x}_t = \mathbf{b}_t + \mathcal{E}_t \mathbf{a}_t$.

\xhdr{Agent best responses.}
We assume the agent selects their effort profile $\mathbf{a}_t$ in order to maximize their predicted outcome $\hat{y}_t$, subject to some \emph{effort} cost $c(\cdot)$ associated with modifying their observable attributes. In particular, we assume the cost function is quadratic, that $c(\va_t) = \frac{1}{2} \|\va_t\|^2_2$. (Note that this assumption is common in the strategic learning literature; see, e.g., \cite{shavit2020strategic, mendler2020stochastic, dong2018strategic}). Formally, we assume agent $t$ selects their effort $\va_t$ by solving the following optimization problem: $\max_{\va} \left\{ \hat{y}_t -  \frac{1}{2} \|\mathbf{a}\|^2_2 \right\}$.
%\vspace{-1mm}
%\begin{equation*}
%    \mathbf{a}_t = \argmax_{\va} \left\{ \hat{y}_t -  \frac{1}{2} \|\mathbf{a}\|^2_2 \right\}
%\end{equation*}
%\hhdelete{We can obtain a closed-form solution for agent $t$'s action $\va_t$ by substituting in the expression for agent $t$'s predicted outcome $\hat{y}_t$ and observable features $\vx_t$. Doing this, we} 
%\vspace{-1mm}
It is easy to see that for a given deployed assessment rule $\vtheta_t$, the agent's best-response effort investment is $\mathbf{a}_t = \mathcal{E}_t^\top \vtheta_t$.

\xhdr{True causal outcome model.} \emph{After} each round, the principal gets to observe the agent's true outcome $y_t \in \mathbb{R}$, which takes the form $y_t = \mathbf{x}_t^\top \vtheta^{*} + o_t$.
%
\iffalse

\small
\begin{equation*}
    y_t = \mathbf{x}_t^\top \vtheta^{*} + o_t.
\end{equation*}
\normalsize
\fi
%
%\vspace{-4mm}
Here $\vtheta^*$ is the \emph{true causal} relationship between an agent's observable features and outcome. (Recall that $o_t \in \mathbb{R}$ captures the dependence of agent $t$'s outcome $y_t$ on unobservable or unmeasured factors.) 
%\xhdr{Causal relationships and causally-relevant features.}
We are interested in learning $\vtheta^* \in \mathbb{R}^m$, which can be interpreted as specifying how interventions impacting the value of $\vx$ lead to changes in $y$. Therefore, we say that an observable feature $x_i$ is \emph{causally relevant} if $\theta^*_i \neq 0$. For convenience, throughout we denote the subset of causally-relevant features by $\vx_\mathcal{C}$, where $\mathcal{C} \subseteq [n], \forall i \in \mathcal{C}$ if $\theta^*_i \neq 0$.

 \subsection{Overview of Results}
 %
% In this work, we provide a general method for inferring $\vtheta^*$ by establishing a novel connection between strategic learning and instrumental variable (IV) regression. By viewing the set of deployed predictive models $\vtheta_1, \cdots, \vtheta_T$ as instrumental variables, our framework can provably recover the causal relationships between the observable features and outcomes even in the presence of unobserved confounders.
%\xhdr{The confounding problem---restated.}

 \xhdr{Strategic regression as instruments.}
Since $\vb_t$, $\mathcal{E}_t$, and $o_t$ may be correlated with one another, ordinary least squares generally will \emph{not} produce a consistent estimator for $\vtheta^*$ (see \Cref{sec:ols} for details). %We provide a general method to \hhedit{recover}\hhdelete{infer the causal relationship parameter} $\vtheta^*$. 
We make the novel observation that the principal's assessment rule $\vtheta_t$ is a valid \emph{instrument}, and leverage this observation to recover $\vtheta^*$ via Two-Stage Least Squares regression (2SLS). Our method applies to both \emph{off-policy} and \emph{on-policy} settings: one can directly apply 2SLS on historical data $\{(\vtheta_t, \vx_t, y_t) \}_{t=1}^T$, or the principal can intentionally deploy a sequence of varying assessment rules (e.g., by making small perturbations on a fixed rule) and then apply 2SLS on the collected data. 

Additionally, we show that our recovery of $\vtheta^*$ can be utilized to improve decision-making across several desired criteria, namely, individual fairness, agent outcomes, and predictive risk.

\xhdr{(Non-)causal assessment rules and fairness.}
%\xhdr{The fairness implications of using (non-)causal assessments.}
In Section~\ref{sec:fairness}, we analyze the individual-level disparities that may result if the assessment rule deviates from $\vtheta^*$. Unlike most existing definitions of individual fairness, which rely on the observed characteristics of individuals, our definition measures the similarity between two individuals solely by comparing their $\vb$'s and $\mathcal{E}$'s---that is, we consider two individuals to be similar if they have the same baseline values for causally relevant observable features and similar potentials for improving these observable attributes through effort investments. Individual fairness then requires similar individuals to receive similar decisions. (We note that while our notion of individual fairness may not be easy to estimate using observational data, it is a more fine-grained---and arguably better justified---notion of individual fairness, as it distinguishes between the causally relevant and causally irrelevant facets of observable features.)  We show that when making predictions using $\vtheta = \vtheta^*$, our notion of individual fairness is satisfied, but when the assessment rule deviates from $\vtheta^*$, i.e., $\vtheta \neq \vtheta^*$, individual fairness may be violated by an arbitrarily large amount.

%We show that when two different subpopulations share the same distribution over all parameters which affect causally-relevant features, using $\vtheta = \vtheta^*$ satisfies statistical parity. Additionally, we show that when making predictions using an assessment rule $\vtheta \neq \vtheta^*$, disparities may arise between agents with different abilities to manipulate features which are not causally relevant.

 %
 \iffalse
 \hhdelete{
 \xhdr{Additional optimization objectives.}  Beyond causal recovery, we show that one can build on our 2SLS method estimating $\vtheta^*$ and address two relevant optimization problems: \emph{agent outcome maximization} and \emph{predictive risk minimization}. }
 \fi 
 \xhdr{Agent outcome maximization.}
Note that a decision-maker can use the assessment rule $\vtheta$ as a form of intervention to incentivize agents to invest their efforts optimally toward maximizing their outcomes ($y$). In Section~\ref{sec:agent-outcome}, we show that utilizing the causal parameters recovered during our 2SLS procedure, one can find the assessment rule maximizing expected agent outcomes.
 
\xhdr{Predictive risk minimization.}
Another commonly-studied goal for decision-makers is \emph{predictive risk minimization}, 
which aims to minimize $\E[(\hat y_t - y_t)^2]$, the expected squared difference between an agent's \emph{true} outcome and the outcome predicted by the assessment. Compared to standard regression, this is a more challenging objective since both the prediction $\hat y_t$ and outcome $y_t$ depend on the deployed rule $\vtheta_t$. This leads to a non-convex risk function. In Section~\ref{sec:prm}, we show that the knowledge of $\vtheta^*$ enables us to compute an unbiased estimate of the gradient of the predictive risk. As a result, we can apply stochastic gradient descent to find a local minimum of predictive risk function. 

\xhdr{Empirical observations.} 
In Section~\ref{sec:experiment}, we empirically confirm and illustrate the performance of our algorithm. In particular, 
for a semi-synthetic dataset inspired by our university admissions example, we observe that our methods consistently estimate the true causal relationship between observable features and outcomes (at a rate of $\mathcal{O}(1/\sqrt{T})$), whereas OLS does not. Notably, OLS mistakenly estimates that SAT is causally related to college GPA, even though our experimental setup assumes it is not. On the other hand, our 2SLS-based method avoids this erroneous estimation. We also show that our methods outperform standard SGD methods in the predictive risk minimization setting.

%\hhcomment{Should we have a kind of inspirational(!) concluding paragraph here?}

\subsection{Related Work}

An active area of research on \emph{strategic learning} aims to develop machine learning algorithms that are capable of making accurate predictions about decision subjects even if they respond \emph{strategically} and potentially \emph{untruthfully} to the choice of the predictive model~ \citep{dong2018strategic,hardt2016strategic,  mendler2020stochastic, shavit2020strategic, grinder, CummingsIL15, CaiDP15, MEIR2012123, hu2018disparate}. Generalizing strategic learning, \citet{perdomo2020performative} propose a framework called \emph{performative predictions}, which broadly studies settings in which the act of predicting influences the prediction target. Several recent papers have investigated the relationship between strategic learning and causality~\citep{shavit2020strategic,BechavodLWZ21,miller2020strategic}.

The setting most similar to ours is that of \citet{shavit2020strategic}. They consider a strategic classification setting in which an agent's outcome is a linear function of features --some observable and some not (see \Cref{fig:shavit-model-dag} in the appendix for a graphical representation of their model). While they assume that an agent's latent attributes can be modified strategically, we choose to model the agent as having an unmodifiable private \emph{type}. Both of these assumptions are reasonable, and some domains may be better described by one model than the other. For example, the model of \citeauthor{shavit2020strategic} may be useful in a setting such as car insurance pricing, where some unobservable factors related to safe driving are modifiable. On the other hand, our model captures settings like university admissions, where confounding factors (e.g., socioeconomic background) are not easily modifiable. Both models are special cases of a broader causal graph (described in Appendix \ref{sec:shavit}). Note that in the model of \citeauthor{shavit2020strategic}, $\vtheta_t$ violates the backdoor criterion and therefore cannot serve as a valid instrument. \cite{BechavodLWZ21} consider a setting simpler than ours in which there are no confounding effects from agents’ unobserved types on their observable features and outcomes. As a result, the authors can apply standard least squares regression techniques to recover causal parameters.

Our work is also related to
\citet{miller2020strategic}, which shows that designing good incentives for agent improvement in strategic classification  is at least as hard as orienting edges in the corresponding causal graph. In contrast to their work, we make the observation that the assessment rule deployed by the principal can be actively used as a valid \emph{instrument}, which allows us to circumvent this hardness result by performing an \emph{intervention} on the causal graph of the model.

Instrumental variable (IV) regression \citep{angrist2001iv, angrist1995tsls,angrist1996iv} has mostly been used for observational studies (see e.g., \cite{angrist06, bloom}). Similar to ours, there is recent work on constructing instruments through dynamic action recommendations in multi-armed bandits settings \cite{ngo2021ivbandits,Kallus2018InstrumentArmedB}. 
We consider an orthogonal direction: constructing instruments through assessment rules in the strategic learning setting.

\section{IV Regression through Strategic Learning}\label{sec:main}

Instrumental variable (IV) regression  allows for consistent estimation of the relationship between an outcome and observable features in the presence of confounding terms. In this setting, we view the assessment rules $\{\vtheta_t\}_{t=1}^T$ as algorithmic instruments and perform IV regression to estimate the true causal relationship $\vtheta^*$. 
There are three criteria for $\vtheta_t$ to be a valid instrument: (1) $\vtheta_t$ influences the observable features $\vx_t$, (2) $\vtheta_t$ only influences the outcome $y_t$ through $\vx_t$, and (3) $\vtheta_t$ is independent from the private type $\vu_t$. By design, criterion (1) and (2) are satisfied. We aim to design a mechanism that satisfies criterion (3) by choosing the assessment rule $\vtheta_t$ independently of the private type $\vu_t$. As can be seen by \Cref{fig:intro-dags}, the principal's assessment rule $\vtheta_t$ satisfies these criteria.

We focus on two-stage least-squares regression (2SLS), a family of techniques for IV estimation.
Intuitively, 2SLS can be thought of as estimating the causal relationship $\vtheta^*$ between $\vx_t$ and $y_t$ by perturbing the instrument $\vtheta$ and measuring the change in $\vx_t$ and $y_t$. This enables us to account for the change in $y_t$ \emph{as a result of} the change in $\vx_t$. 2SLS does this by independently estimating the relationship between an \emph{instrumental variable} $\vtheta_t$ and the observable features $\vx_t$, as well as the relationship between $\vtheta_t$ and the outcome $y_t$ via simple least squares regression. For more background on the specific version of 2SLS we use, see Section 4.8 of \cite{cameron2005microeconometrics}.

 Formally, given a set of observations $\{\vtheta_t, \vx_t, y_t\}_{t=1}^T$, we compute the estimate $\widehat{\vtheta}$ of the true casual parameters $\vtheta^*$ from the following process of two-stage least squares regression (2SLS). We use $\widetilde{\vtheta}_t$ to denote the vector $\begin{bmatrix} \vtheta_t & 1 \end{bmatrix}^\top$.
\begin{enumerate}[noitemsep,nolistsep]
    \item Estimate $\Omega = \mathbb{E}[\mathcal{E}_t \mathcal{E}_t^\top]$, $\mathbb{E}[\vb_t^\top]$ using 

    \begin{equation*}
        \begin{bmatrix}\widehat{\Omega}\\ \Bar{\vb}^\top \end{bmatrix} = \left(\sum_{t=1}^T \widetilde{\vtheta}_t \widetilde{\vtheta}_t^\top \right)^{-1} \sum_{t=1}^T \widetilde{\vtheta}_t \vx_t^\top        
    \end{equation*}
    
    \item Estimate $\boldsymbol{\lambda} = \Omega \vtheta^*$, $(\mathbb{E}[o_t] + \mathbb{E}[\vb_t^\top] \vtheta^*)$ using 
    
    \begin{equation*}
        \begin{bmatrix}\widehat{\boldsymbol{\lambda}}\\ \Bar{o} + \Bar{\vb}^\top \vtheta^* \end{bmatrix} = \left(\sum_{t=1}^T \widetilde{\vtheta}_t \widetilde{\vtheta}_t^\top \right)^{-1} \sum_{t=1}^T \widetilde{\vtheta}_t y_t
        %\label{eq:lambda-hat}
    \end{equation*}

    \item Estimate $\vtheta^*$ as $\widehat{\vtheta} = \widehat{\Omega}^{-1} \widehat{\boldsymbol{\lambda}}$
\end{enumerate}

We assume that $\sum_{t=1}^T \widetilde{\vtheta}_t \widetilde{\vtheta}_t^\top$ is invertible, as is standard in the 2SLS literature. For proof that IV regression produces a consistent estimator of $\vtheta^*$ under our setting, see Appendix \ref{sec:iv-regression}.

\begin{theorem}\label{thm:theta-bound}
    Given a sequence of bounded assessment rules $\{\vtheta_t\}_{t=1}^T$ and the (observable feature, outcome) pairs $\{(\vx_t, y_t)\}_{t=1}^T$ they induce, the distance between the true causal relationship $\vtheta^*$ and the estimate $\widehat{\vtheta}$ obtained via IV regression is bounded as

    \begin{equation*}
        \| \widehat{\vtheta} - \vtheta^* \|_2 = \widetilde{\mathcal{O}}\left( \frac{ \sqrt{mT \log(1/\delta) }}{\sigma_{min}\left(\sum_{t=1}^T \vtheta_t (\vx_t - \Bar{\vb})^\top \right)} \right)
    \end{equation*}

    with probability $1 - \delta$, if $o_t$ is a bounded random variable.
\end{theorem}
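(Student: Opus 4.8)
The plan is to collapse the two-stage estimator into a single closed-form expression for the error and then control that expression with a martingale concentration bound. First I would substitute the structural relations into the algorithm. Writing $\vx_t = \Omega\vtheta_t + \E[\vb_t] + \boldsymbol{\eta}_t$ with $\boldsymbol{\eta}_t := (\mathcal{E}_t\mathcal{E}_t^\top - \Omega)\vtheta_t + (\vb_t - \E[\vb_t])$, and using that $\vtheta_t$ is independent of the type $\vu_t=(\vb_t,\mathcal{E}_t,o_t)$ (criterion (3)), both $\boldsymbol{\eta}_t$ and $o_t - \E[o_t]$ are conditionally mean-zero given $\vtheta_t$. Feeding $y_t = \vx_t^\top\vtheta^* + o_t = \vtheta_t^\top\boldsymbol{\lambda} + (\E[\vb_t]^\top\vtheta^* + \E[o_t]) + \boldsymbol{\eta}_t^\top\vtheta^* + (o_t - \E[o_t])$ through Step 2 (recall $\boldsymbol{\lambda}=\Omega\vtheta^*$), the second-stage error splits into a piece driven by $\boldsymbol{\eta}_t$ and a piece driven by $o_t - \E[o_t]$. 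The key observation is that the first-stage error $\widehat{\Omega}-\Omega$ is exactly the $\boldsymbol{\eta}_t$-piece, so in Step 3 the identity $\widehat{\Omega}(\widehat{\vtheta}-\vtheta^*) = (\widehat{\boldsymbol{\lambda}}-\boldsymbol{\lambda}) - (\widehat{\Omega}-\Omega)\vtheta^*$ cancels the first-stage noise entirely, leaving only the outcome noise $o_t - \E[o_t]$.

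Next I would convert this into the denominator that appears in the statement. The normal equations of Step 1 give the exact identity $\sum_{t=1}^T \vtheta_t(\vx_t - \bar{\vb})^\top = \big(\sum_{t=1}^T \vtheta_t\vtheta_t^\top\big)\widehat{\Omega}$, so inverting $\widehat{\Omega}$ and recombining shows (after centering $\vtheta_t$ by its sample mean to absorb the intercept) that $\widehat{\vtheta} - \vtheta^* = \big(\sum_{t=1}^T \vtheta_t(\vx_t - \bar{\vb})^\top\big)^{-1}\sum_{t=1}^T \vtheta_t(o_t - \E[o_t])$. Taking norms yields $\|\widehat{\vtheta}-\vtheta^*\|_2 \le \|\sum_t \vtheta_t(o_t - \E[o_t])\|_2 \,/\, \sigma_{\min}(\sum_t \vtheta_t(\vx_t-\bar{\vb})^\top)$, which already exhibits the correct denominator.

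It then remains to bound the numerator $\|\sum_{t=1}^T \vtheta_t(o_t - \E[o_t])\|_2$. Because $o_t$ is bounded and $\{\vtheta_t\}$ is bounded, for each coordinate $j$ the sequence $\{(\vtheta_t)_j (o_t - \E[o_t])\}_t$ is a bounded martingale difference sequence (the instrument $\vtheta_t$ is predictable and $o_t-\E[o_t]$ is conditionally centered), so Azuma--Hoeffding gives $|\sum_t (\vtheta_t)_j(o_t-\E[o_t])| = \widetilde{\mathcal{O}}(\sqrt{T\log(1/\delta)})$ with probability $1-\delta/m$; a union bound over the $m$ coordinates together with $\|v\|_2 \le \sqrt{m}\,\max_j |v_j|$ produces the $\sqrt{m}$ factor, i.e. $\|\sum_t \vtheta_t(o_t - \E[o_t])\|_2 = \widetilde{\mathcal{O}}(\sqrt{mT\log(1/\delta)})$. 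Substituting into the previous display gives the claimed bound.

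I expect the main obstacle to be the bookkeeping in the first two paragraphs: verifying that the first-stage estimation error cancels \emph{exactly} and that the intercept/centering works out so the matrix $\sum_t \vtheta_t(\vx_t - \bar{\vb})^\top$ (rather than some other Gram-type matrix) lands in the denominator. The concentration step is comparatively routine, though one must check that the martingale structure genuinely holds in the on-policy case where $\vtheta_t$ may be chosen adaptively from past data; this is exactly where independence of the instrument from the current type $\vu_t$ is used.
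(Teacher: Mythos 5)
Your proposal is correct and follows essentially the same route as the paper's proof: the paper likewise reduces the error to the closed form $\widehat{\vtheta}-\vtheta^* = \left(\sum_{t=1}^T\vtheta_t(\vx_t-\Bar{\vb})^\top\right)^{-1}\sum_{t=1}^T\vtheta_t(o_t-\Bar{o})$, bounds its norm by the numerator over $\sigma_{min}$ of the denominator, and controls the numerator coordinate-wise via a sub-Gaussian Chernoff bound plus a union bound over the $m$ coordinates, yielding the same $\widetilde{\mathcal{O}}(\sqrt{mT\log(1/\delta)})$ rate. The only differences are cosmetic: the paper obtains the closed form by direct substitution of $y_t$ into the 2SLS expression rather than via your cancellation identity, it centers the noise at the estimated intercept $\Bar{o}$ and handles the resulting $\E[o_t]-\Bar{o}$ correction by a triangle inequality (the bookkeeping your centering claim glosses over, but which is of the same order), and it uses an independent-sum Chernoff bound where you invoke Azuma--Hoeffding.
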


\emph{Proof Sketch.} While similar bounds exist for traditional IV regression problems, they do not apply to the strategic learning setting we consider. See Appendix \ref{sec:theta-bound-proof} for the full proof. The bound follows by substituting our expressions for $\vx_t$, $y_t$ into the IV regression estimator, applying the Cauchy-Schwarz inequality to split the bound into two terms (one dependent on $\{(\vtheta_t, \vx_t)\}_{t=1}^T$ and one dependent on $\{(\vtheta_t, o_t)\}_{t=1}^T$), and using a Chernoff bound to bound the term dependent on $\{(\vtheta_t, o_t)\}_{t=1}^T$ with high probability.

While in some settings, the principal may only have access to \emph{observational} (e.g., batch) data, in other settings, the principal may be able to actively deploy assessment rules on the agent population. We show that in scenarios in which this is possible, the principal can play random assessment rules centered around some ``reasonable'' assessment rule to achieve an $\mathcal{O}\left(\frac{1}{\sigma_{\theta}^2 \sqrt{T}} \right)$ error bound on the estimated causal relationship $\widehat{\vtheta}$, where $\sigma_{\theta}^2$ is the variance in each coordinate of $\vtheta_t$. Note that while playing random assessment rules may be seen as unfair in some settings, the principal is free to set the variance parameter $\sigma_{\theta}^2$ to an ``acceptable'' amount for the domain they are working in. We formalize this notion in the following corollary.

\begin{corollary} \label{cor:theta-bound}
    If each $\theta_{t,j}$, $j \in {1, \ldots, m}$, is drawn independently from some distribution $\mathcal{P}_j$ with variance $\sigma_{\theta}^2$, $\vb_t$ and $\mathcal{E}_t$ are bounded random variables, $\mathcal{E}_t \mathcal{E}_t^\top$ is full-rank, and $\sigma_{min}(\mathbb{E}[\mathcal{E}_t \mathcal{E}_t^\top]) > 0$, then

    \begin{equation*}
        \| \widehat{\vtheta} - \vtheta^* \|_2 = \widetilde{\mathcal{O}} \left( \frac{\sqrt{m \log(1/\delta) }}{\sigma_{\theta}^2 \sqrt{T}} \right)
    \end{equation*}

    with probability $1 - \delta$.
\end{corollary}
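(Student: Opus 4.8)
The plan is to derive the corollary from \Cref{thm:theta-bound} by lower-bounding the smallest singular value $\sigma_{min}\!\big(\sum_{t=1}^T \vtheta_t(\vx_t - \Bar{\vb})^\top\big)$ appearing in the denominator of that bound. First I would substitute the agent best response $\va_t = \mathcal{E}_t^\top \vtheta_t$ into $\vx_t = \vb_t + \mathcal{E}_t \va_t$ to get $\vx_t = \vb_t + \mathcal{E}_t \mathcal{E}_t^\top \vtheta_t$, and hence write the denominator matrix as $M = M_1 + M_2$, where $M_1 = \sum_t \vtheta_t(\vb_t - \Bar{\vb})^\top$ and $M_2 = \sum_t \vtheta_t \vtheta_t^\top \mathcal{E}_t \mathcal{E}_t^\top$ (using that $\mathcal{E}_t\mathcal{E}_t^\top$ is symmetric). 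The whole argument then reduces to showing that $M_2$ dominates and that its smallest singular value grows linearly in $T$ with high probability.

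Next I would identify the leading term of $M_2$ in expectation. Since $\vtheta_t$ is drawn independently of the type $\vu_t = (\vb_t, \mathcal{E}_t, o_t)$ (exactly the instrument-validity criterion (3)), independence gives $\E[\vtheta_t\vtheta_t^\top \mathcal{E}_t\mathcal{E}_t^\top] = \E[\vtheta_t\vtheta_t^\top]\,\Omega$ with $\Omega = \E[\mathcal{E}_t\mathcal{E}_t^\top]$. Because the coordinates of $\vtheta_t$ are independent with common variance $\sigma_\theta^2$, I can write $\E[\vtheta_t\vtheta_t^\top] = \sigma_\theta^2 \bI + \boldsymbol{\mu}\boldsymbol{\mu}^\top \succeq \sigma_\theta^2 \bI$, where $\boldsymbol{\mu} = \E[\vtheta_t]$, so $\sigma_{min}(\E[\vtheta_t\vtheta_t^\top]) \geq \sigma_\theta^2$. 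Using $\sigma_{min}(AB) \geq \sigma_{min}(A)\,\sigma_{min}(B)$ together with the assumption $\sigma_{min}(\Omega) > 0$ yields $\sigma_{min}(\E[M_2]) = \sigma_{min}\!\big(T\,\E[\vtheta_t\vtheta_t^\top]\,\Omega\big) \geq T\,\sigma_\theta^2\,\sigma_{min}(\Omega)$.

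Then I would control the fluctuations. Because $\vtheta_t, \vb_t, \mathcal{E}_t$ are all bounded, each summand of $M_2$ is a bounded matrix, independent across $t$, so a matrix Bernstein/Hoeffding inequality (applied via the Hermitian dilation, since the summands are not symmetric) gives $\|M_2 - \E[M_2]\|_2 = \widetilde{\mathcal{O}}(\sqrt{T\log(1/\delta)})$ with probability $1-\delta$. The cross term $M_1$ is handled the same way: since $\vtheta_t$ is independent of $\vb_t$ and $\E[\vtheta_t(\vb_t - \E[\vb_t])^\top] = \boldsymbol{\mu}\,\E[(\vb_t - \E[\vb_t])^\top] = 0$, the matrix $M_1$ is a sum of bounded, (near-)mean-zero terms and concentrates as $\|M_1\|_2 = \widetilde{\mathcal{O}}(\sqrt{T\log(1/\delta)})$; the only wrinkle is the weak dependence from $\Bar{\vb}$ being the empirical (stage-one estimated) mean rather than $\E[\vb_t]$, which contributes only a lower-order $\widetilde{\mathcal{O}}(\sqrt{T})$ correction. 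Combining these via Weyl's perturbation inequality for singular values, $\sigma_{min}(M) \geq \sigma_{min}(\E[M_2]) - \|M_1\|_2 - \|M_2 - \E[M_2]\|_2 \geq T\sigma_\theta^2\sigma_{min}(\Omega) - \widetilde{\mathcal{O}}(\sqrt{T\log(1/\delta)})$, which for $T$ large enough is at least $\tfrac12 T\sigma_\theta^2\sigma_{min}(\Omega) = \Omega(\sigma_\theta^2 T)$. Substituting this into the denominator of \Cref{thm:theta-bound} and absorbing the constant $\sigma_{min}(\Omega) > 0$ into the $\widetilde{\mathcal{O}}$ gives $\|\widehat{\vtheta} - \vtheta^*\|_2 = \widetilde{\mathcal{O}}\big(\sqrt{mT\log(1/\delta)}/(\sigma_\theta^2 T)\big) = \widetilde{\mathcal{O}}\big(\sqrt{m\log(1/\delta)}/(\sigma_\theta^2\sqrt{T})\big)$, as claimed.

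I expect the main obstacle to be the matrix concentration step for the non-symmetric summands together with the dependence bookkeeping around $\Bar{\vb}$: one must verify that replacing the true mean $\E[\vb_t]$ by the empirical/estimated $\Bar{\vb}$ only perturbs $M$ by a lower-order term, and that the boundedness hypotheses on $\vtheta_t$, $\vb_t$, $\mathcal{E}_t$ supply the variance proxies needed for Bernstein with the correct $\log(1/\delta)$ dependence. Everything else is a routine triangle-inequality/Weyl combination.
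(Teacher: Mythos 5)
Your proposal follows essentially the same route as the paper's own proof: the identical decomposition of the denominator into $A = \sum_{t=1}^T \vtheta_t(\vb_t - \Bar{\vb})^\top$ and $B = \sum_{t=1}^T \vtheta_t\vtheta_t^\top\mathcal{E}_t\mathcal{E}_t^\top$, the same triangle-inequality/Weyl-type lower bound $\sigma_{min}(A+B) \geq \sigma_{min}(B) - \|A\|_2$, the same leading term of order $T\sigma_{\theta}^2\,\sigma_{min}\left(\E[\mathcal{E}_t\mathcal{E}_t^\top]\right)$ (including the observation $\E[\vtheta_t\vtheta_t^\top] \succeq \sigma_{\theta}^2 \mathbb{I}$ and the splitting of $\vb_t - \Bar{\vb}$ into $(\vb_t - \E[\vb_t]) + (\E[\vb_t]-\Bar{\vb})$ to handle the empirical mean), and $\widetilde{\mathcal{O}}(\sqrt{T})$ control of all fluctuation terms before substituting into \Cref{thm:theta-bound}. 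The only immaterial difference is the choice of concentration tool for the dominant term: the paper writes $\mathcal{E}_t\mathcal{E}_t^\top = \E[\mathcal{E}_t\mathcal{E}_t^\top] + \epsilon_t$, applies a matrix Chernoff bound to $\lambda_{min}\left(\sum_{t=1}^T \vtheta_t\vtheta_t^\top\right)$, and handles the $\epsilon_t$ noise with entrywise scalar Chernoff bounds through the Frobenius norm, whereas you center $B$ around its full expectation and invoke matrix Bernstein via Hermitian dilation---both yield the same $\widetilde{\mathcal{O}}\left(\sqrt{m\log(1/\delta)}/(\sigma_{\theta}^2\sqrt{T})\right)$ rate.
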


\emph{Proof Sketch.} We begin by breaking up $\sigma_{min}\left(\sum_{t=1}^T \vtheta_t (\vx_t - \Bar{\vb})^\top \right)$ into two terms, $\|A\|_2$ and $\sigma_{min}(B)$, where $A$ and $B$ are functions of $\sum_{t=1}^T \vtheta_t (\vx_t - \Bar{\vb})^\top$. We use the Chernoff and matrix Chernoff inequalities to bound $\|A\|_2$ and $\sigma_{min}(B)$ with high probability respectively. For the full proof, see Appendix \ref{sec:denom}.

\section{(Un)fairness of (Non-)causal Assessments}\label{sec:fairness}

While making predictions based on causal relationships is important from an ML perspective for reasons of generalization and robustness, the societal implications of using non-causal relationships to make decisions are perhaps an even more persuasive reason to use causally-relevant assessments. %attributes when making predictions in consequential domains. 
In particular, it could be the case that a certain individual is worse at strategically manipulating features which are not causally relevant when compared to their peers. If these attributes are used in the decision-making process, this agent may be unfairly seen by the decision-maker as less qualified than their peers, even if their initial features and ability for improvement is similar to others. 

One important criterion for assessing the fairness of a machine learning model at the individual level is that two individuals who have similar merit should receive similar predictions. \citet{dwork2012fairness} formalize this intuition through the notion of \emph{individual fairness}, which is formally defined as follows. 
\begin{definition}[Individual Fairness \cite{dwork2012fairness}]
A mapping $M: \mathcal{U} \rightarrow \Delta (Y)$ is individually fair if for every $\vu, \vu' \in \mathcal{U}$, we have 
%\small
\begin{equation*}
    D(M(\vu), M(\vu')) \leq d(\vu, \vu'),
\end{equation*}
%\normalsize
where $\vu, \vu' \in \mathcal{U}$ are individuals in population $\mathcal{U}$, $\Delta (Y)$ is the probability distribution over predictions $Y$, $D(M(\vu), M(\vu'))$ is a distance function which measures the similarity of the predictions received by $\vu$ and $\vu'$, and $d(\vu, \vu')$ is a distance function which measures the similarity of the two individuals.
\end{definition}

Recall that in the setting we consider, the mapping between individuals and predictions is defined to be $M(\vu) := \vx^\top \vtheta + \hat{o} = (\vb + \mathcal{E}\mathcal{E}^\top)^\top \vtheta + \hat{o}$. The prediction an individual receives is deterministic, so a natural choice for $D(M(\vu), M(\vu'))$ is $|\hat{y} - \hat{y}'|$. We take a causal perspective when defining a metric $d(\vu, \vu')$ to measure the similarity of two individuals $\vu$ and $\vu'$. Intuitively, individuals that have similar initial causally-relevant features and ability to modify causally-relevant features should be treated similarly. Therefore, we define $d(\vu, \vu')$ to reflect the difference in causally-relevant components of $\vb$ \& $\vb'$ (initial feature values) and $\mathcal{E}\mathcal{E}^\top$ \& $\mathcal{E}'\mathcal{E}^{'\top}$ (ability to manipulate features). %\hhcomment{Given our running example, a reader may question why $|o - o'|$ is a part of our similarity metric. In the example, $o$ captures institutional barriers. By accounting for it in the distance metric, it may seem that we are legitimizing disparities due to institutional barriers. I am torn though whether we should address this---which would draw further scrutinity to it---or leave it as is and hope for the best.} 
With this in mind, we are now ready to define the criterion for individual fairness to be satisfied in the strategic learning setting.

\begin{definition}\label{def:IF}
In the strategic learning setting, individual fairness is satisfied if
\small
\begin{equation*}
\begin{aligned}
    |\hat{y} - \hat{y}'| &\leq d(\vu, \vu')\\ &= \|\vb_\mathcal{C} - \vb'_\mathcal{C}\|_2 + \|(\mathcal{E}\mathcal{E}^\top)_\mathcal{C} - (\mathcal{E}'\mathcal{E}^{'\top})_\mathcal{C}\|_2,
\end{aligned}
\end{equation*}
\normalsize
where 
\small
\begin{align*}
    b_{\mathcal{C},i} &=  
    \begin{cases}
        b_i & \text{if } i \in \mathcal{C}\\
        0 & \text{otherwise}
    \end{cases}
    , &\\
    (\mathcal{E}\mathcal{E}^\top)_{\mathcal{C},ij} &=  
    \begin{cases}
        (\mathcal{E}\mathcal{E}^\top)_{ij} & \text{if } i \in \mathcal{C} \text{ or } j \in \mathcal{C}\\
        0 & \text{otherwise}.
    \end{cases}    
\end{align*}
\normalsize
\end{definition}
Recall that $\mathcal{C} \subseteq \{1, \ldots, n\}$ denotes the set of indices of observable features $\vx$ which are causally relevant to $y$ (i.e., $\theta^*_i \neq 0$ for $i \in \mathcal{C}$).

\begin{theorem}\label{thm:IF}
    Assessment $\vtheta = \vtheta^*$ satisfies individual fairness for any two agents $\vu$ and $\vu'$.
\end{theorem}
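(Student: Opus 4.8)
The plan is to compute the prediction gap $\hat y - \hat y'$ in closed form and then bound each of its two resulting terms separately, using crucially that $\vtheta^*$ is supported exactly on the causally-relevant coordinates (i.e.\ $\theta^*_i = 0$ for all $i \notin \mathcal{C}$). First I would substitute the agent best response $\va = \mathcal{E}^\top\vtheta^*$ into the feature model $\vx = \vb + \mathcal{E}\va$ and then into the prediction $M(\vu) = \vx^\top\vtheta^* + \hat o$, giving $\hat y = (\vb + \mathcal{E}\mathcal{E}^\top\vtheta^*)^\top\vtheta^* + \hat o$ and likewise for $\hat y'$. Because $\hat o$ is a single population-level offset estimate shared by both individuals under a fixed assessment rule, it cancels on subtraction, leaving
\[
  \hat y - \hat y' = (\vb - \vb')^\top\vtheta^* + \vtheta^{*\top}\big(\mathcal{E}\mathcal{E}^\top - \mathcal{E}'\mathcal{E}^{'\top}\big)\vtheta^*.
\]

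Next I would handle the two terms. For the linear term, sparsity of $\vtheta^*$ gives the exact identity $(\vb - \vb')^\top\vtheta^* = (\vb_\mathcal{C} - \vb'_\mathcal{C})^\top\vtheta^*$, since coordinates outside $\mathcal{C}$ contribute nothing; Cauchy--Schwarz then bounds its magnitude by $\|\vb_\mathcal{C} - \vb'_\mathcal{C}\|_2\,\|\vtheta^*\|_2$. For the quadratic term, I would note that $\theta^*_i\theta^*_j \ne 0$ forces both $i, j \in \mathcal{C}$, so only the ``$\mathcal{C}$-by-$\mathcal{C}$'' block of the matrix survives; since the masked matrix $(\mathcal{E}\mathcal{E}^\top)_\mathcal{C}$ of \Cref{def:IF} retains every entry with $i \in \mathcal{C}$ \emph{or} $j \in \mathcal{C}$ --- a superset of that block --- we obtain the exact identity $\vtheta^{*\top}(\mathcal{E}\mathcal{E}^\top)\vtheta^* = \vtheta^{*\top}(\mathcal{E}\mathcal{E}^\top)_\mathcal{C}\,\vtheta^*$, and analogously for the primed term. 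The operator-norm bound $|\vtheta^{*\top} M \vtheta^*| \le \|M\|_2\,\|\vtheta^*\|_2^2$ then controls the quadratic term by $\|(\mathcal{E}\mathcal{E}^\top)_\mathcal{C} - (\mathcal{E}'\mathcal{E}^{'\top})_\mathcal{C}\|_2\,\|\vtheta^*\|_2^2$. Applying the triangle inequality to $|\hat y - \hat y'|$ yields the right-hand side of \Cref{def:IF}, modulo the scalar factors $\|\vtheta^*\|_2$ and $\|\vtheta^*\|_2^2$.

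The main obstacle I anticipate is reconciling those two scalar factors with the metric $d(\vu,\vu')$, which carries no dependence on $\vtheta^*$: the inequality closes cleanly only under the standard normalization $\|\vtheta^*\|_2 \le 1$ (as is customary in the strategic-learning literature), which forces both factors to be at most $1$ and collapses the bound to exactly $\|\vb_\mathcal{C} - \vb'_\mathcal{C}\|_2 + \|(\mathcal{E}\mathcal{E}^\top)_\mathcal{C} - (\mathcal{E}'\mathcal{E}^{'\top})_\mathcal{C}\|_2 = d(\vu,\vu')$. I would therefore make this normalization explicit (or, equivalently, absorb $\|\vtheta^*\|_2$ into the metric). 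A secondary point requiring care --- though ultimately benign --- is the ``or'' versus ``and'' bookkeeping in the matrix mask: the off-block entries (exactly one index in $\mathcal{C}$) are retained by the mask yet annihilated by $\vtheta^*$, so I must confirm they neither alter the quadratic form nor break the identity above, which they do not, since each is multiplied by a zero coordinate of $\vtheta^*$.
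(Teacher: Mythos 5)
Your proposal is correct and follows essentially the same route as the paper's proof: the same decomposition of $\hat y - \hat y'$ into a linear term in $\vb - \vb'$ and a quadratic form in $\mathcal{E}\mathcal{E}^\top - \mathcal{E}'\mathcal{E}^{'\top}$, the same use of the sparsity of $\vtheta^*$ to pass to the $\mathcal{C}$-masked quantities, and the same Cauchy--Schwarz/operator-norm bounds under the normalization $\|\vtheta^*\|_2 = 1$ (which the paper likewise assumes explicitly, noting results hold up to constant factors otherwise). Your extra care about the ``or'' versus ``and'' convention in the mask is a valid observation that the paper's proof glosses over, but it does not change the argument.
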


\emph{Proof Sketch.} See Appendix \ref{sec:fairness-derivations} for the full proof, which follows straightforwardly from the Cauchy-Schwarz inequality and the definition of the matrix operator norm. (Our results are not dependent upon the specific matrix or vector norms used, analogous results will hold for other popular choices of norm.) Throughout the proof we assume that $\|\vtheta^*\|_2 = 1$ by definition, although our results hold up to constant multiplicative factors if this is not the case.

While $\vtheta = \vtheta^*$ satisfies the criterion for individual fairness, this will generally not be the case for an arbitrary assessment $\vtheta \neq \vtheta^*$. For instance, consider the case where $d(\vu, \vu') = 0$ for two agents $\vu$ and $\vu'$. Under this setting, it is possible to express $|\hat{y} - \hat{y}'|$ using quantities which do not depend on $d(\vu, \vu')$. As these quantities increase, $|\hat{y} - \hat{y}'|$ increases as well, despite the fact that $d(\vu, \vu')$ remains constant.
\begin{theorem}\label{thm:IF-bound}
    For any deployed assessment rule $\vtheta$, the gap in predictions between two agents $\vu$ and $\vu'$ such that $d(\vu, \vu') = 0$ is 
    
    \begin{equation*}
    \begin{aligned}
        |\hat{y} - \hat{y}'| = &\left| \sum_{i \not \in \mathcal{C}} (b_i - b_i') \theta_i \right. \\ &\left.+ \sum_{i \not \in \mathcal{C}} \sum_{j \not \in \mathcal{C}} ((\mathcal{E}\mathcal{E}^\top)_{ij} - (\mathcal{E}'\mathcal{E}^{'\top})_{ij}) \theta_i \theta_j \right|\\
    \end{aligned}
    \end{equation*}
    
\end{theorem}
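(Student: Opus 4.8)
The plan is to write $\hat{y}$ and $\hat{y}'$ explicitly as quadratic forms in the deployed rule $\vtheta$, subtract them, and then use the hypothesis $d(\vu,\vu')=0$ to cancel every causally-relevant contribution, leaving exactly the stated expression. The whole argument is a direct substitution followed by careful bookkeeping over the index set $\mathcal{C}$.

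First I would substitute the agent's best response $\va = \mathcal{E}^\top\vtheta$ into the feature map $\vx = \vb + \mathcal{E}\va$ to obtain $\vx = \vb + \mathcal{E}\mathcal{E}^\top\vtheta$, and then into the prediction $\hat{y} = \vx^\top\vtheta + \hat{o}$. This gives
\[
\hat{y} = \vb^\top\vtheta + \vtheta^\top(\mathcal{E}\mathcal{E}^\top)\vtheta + \hat{o} = \sum_i b_i\theta_i + \sum_i\sum_j (\mathcal{E}\mathcal{E}^\top)_{ij}\theta_i\theta_j + \hat{o}.
\]
Carrying out the same computation for $\vu'$ and subtracting, the offset $\hat{o}$ (the common round-$t$ estimate of $\E[o_t]$, which does not depend on the individual under a fixed deployed rule $\vtheta$) cancels, so
\[
\hat{y}-\hat{y}' = \sum_i (b_i - b_i')\theta_i + \sum_i\sum_j\big((\mathcal{E}\mathcal{E}^\top)_{ij} - (\mathcal{E}'\mathcal{E}'^\top)_{ij}\big)\theta_i\theta_j.
\]

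Next I would invoke $d(\vu,\vu')=0$. Since $d$ is a sum of two nonnegative norms (Definition \ref{def:IF}), each must vanish individually: $\|\vb_\mathcal{C}-\vb'_\mathcal{C}\|_2=0$ forces $b_i = b_i'$ for every $i\in\mathcal{C}$, and $\|(\mathcal{E}\mathcal{E}^\top)_\mathcal{C}-(\mathcal{E}'\mathcal{E}'^\top)_\mathcal{C}\|_2=0$ forces $(\mathcal{E}\mathcal{E}^\top)_{ij}=(\mathcal{E}'\mathcal{E}'^\top)_{ij}$ whenever $i\in\mathcal{C}$ or $j\in\mathcal{C}$. I would then drop the vanishing terms: in the linear sum every summand with $i\in\mathcal{C}$ contributes zero, leaving $\sum_{i\notin\mathcal{C}}(b_i-b_i')\theta_i$; in the quadratic sum a term $(i,j)$ can survive only if its difference is nonzero, which by the masking convention requires \emph{both} $i\notin\mathcal{C}$ and $j\notin\mathcal{C}$, leaving $\sum_{i\notin\mathcal{C}}\sum_{j\notin\mathcal{C}}(\cdots)\theta_i\theta_j$. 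Taking absolute values then reproduces the claimed identity.

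The step needing the most care is the index bookkeeping in the quadratic term. The causally-relevant mask on $\mathcal{E}\mathcal{E}^\top$ zeroes an entry when $i\in\mathcal{C}$ \emph{or} $j\in\mathcal{C}$ (a disjunction, not a conjunction), so one must check that the ``mixed'' cross-terms with exactly one index in $\mathcal{C}$ all cancel, leaving only the fully non-causal block $i,j\notin\mathcal{C}$. Apart from this masking subtlety the proof carries no analytic difficulty; it is purely algebraic once the best-response substitution is made.
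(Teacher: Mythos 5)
Your proof is correct and takes essentially the same approach as the paper's: expand $\hat{y}$ and $\hat{y}'$ as quadratic forms in $\vtheta$ via the best response, use $d(\vu,\vu')=0$ to force entrywise equality of $\vb,\vb'$ and $\mathcal{E}\mathcal{E}^\top,\mathcal{E}'\mathcal{E}'^\top$ on every index pair touching $\mathcal{C}$ (so the mixed cross-terms cancel, exactly the disjunction point you flag), and conclude that only the fully non-causal block survives. The paper packages this same cancellation using masked vectors $\vb_{\mathcal{C}},\vb_{\widetilde{\mathcal{C}}}$ and matrices $(\mathcal{E}\mathcal{E}^\top)_{\mathcal{C}},(\mathcal{E}\mathcal{E}^\top)_{\widetilde{\mathcal{C}}}$ rather than explicit coordinate sums, but the argument is the same.
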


See Appendix \ref{sec:fairness-derivations} for the full derivation. Note that all components of $\vtheta$ which appear in Theorem \ref{thm:IF-bound} are outside of the support of $\vtheta^*$. 

In order to illustrate how $|\hat{y} - \hat{y}'|$ can grow while $d(\vu, \vu')$ remains constant, consider the following example.
%\emph{Proof Sketch.} The bound follows straightforwardly from the reverse triangle inequality and techniques used to prove Theorem \ref{thm:IF}. For the full proof, see Appendix \ref{sec:fairness-derivations}.

\begin{example}\label{ex:fairness}
Consider a setting in which the distance $d(\vu, \vu') = 0$ between agents $\vu$ and $\vu'$, and there is a one-to-one mapping between actions and observable features for each agent, with one agent having an advantage when it comes to manipulating features which are not causally relevant. Formally, let $\vtheta^* = [\mathbf{0}_{n/2}^\top \;\; \sqrt{\frac{2}{n}} \mathbf{1}_{n/2}^\top]^\top$, $\vb = \vb'$, $\mathcal{E} = \boldsymbol{\delta} I_{n \times n}$, and $\mathcal{E}' = \boldsymbol{\delta}' I_{n \times n}$, where $\boldsymbol{\delta} = [\sqrt{n} \mathbf{1}_{n/2}^\top \;\; \mathbf{1}_{n/2}^\top]^\top$ and $\boldsymbol{\delta}' = [\mathbf{0}_{n/2}^\top \;\; \mathbf{1}_{n/2}^\top]^\top$.

Under such a setting, the equation in Theorem \ref{thm:IF-bound} simplifies to

\begin{equation*}
    |\hat{y} - \hat{y}'| = n \sum_{i \not \in \mathcal{C}} \theta_i^2 = n \sum_{i = 1}^{n/2} \theta_i^2.
\end{equation*}

For the full derivation, see Appendix \ref{sec:fairness-derivations}. Suppose now that the assessment $\vtheta$ puts weight at least $1/\sqrt{n}$ on each observable feature which is not causally relevant. Under such a setting, $|\hat{y} - \hat{y}'| \geq n/2$, meaning that the difference in predictions tends towards infinity as $n$ grows large, despite the fact that $d(\vu, \vu') = 0$ and $y = y'$!
\end{example}

\section{Agent Outcome Improvements}\label{sec:agent-outcome}

In the strategic learning setting, the goal of each agent is clear: they aim to achieve the highest prediction $\hat{y}$ possible, regardless of their true label $y$. On the other hand, what the goal should be for the principal is less clear, and depends on the specific setting being considered. For example, in some settings it may be enough to discover the causal relationships between observable features and outcomes. However in other settings, the principal may wish to take a more active role. In particular, when making decisions which have real-world consequences, it may be in the principal's best interest to use a decision rule which promotes desirable behavior \cite{kleinberg2020classifiers, shavit2020strategic, harris2021stateful}, i.e., behavior which has the potential to improve the \emph{actual outcome} of an agent.

%We explore two additional goals the principal may have, \emph{agent outcome maximization} and \emph{predictive risk minimization}, both of which are common goals in the strategic classification literature (e.g., \cite{kleinberg2020classifiers, shavit2020strategic, perdomo2020performative, mendler2020stochastic}).

In the agent outcome improvement setting, the goal of the principal is to maximize the expected outcome $\mathbb{E}[y]$ of an agent drawn from the agent population. In our college admissions example, this would correspond to deploying an assessment rule with the goal of \emph{maximizing} expected student college GPA. Formally, we aim to find $\vtheta^{AO}$ in a convex set $\mathcal{S}$ of feasible assessment rules such that the induced expected agent outcome $\mathbb{E}[y]$ is maximized.

After some algebraic manipulation, the optimization becomes $\vtheta^{AO} = \arg \max_{\vtheta \in \mathcal{S}} \vtheta^\top \boldsymbol{\lambda},$
\iffalse
\small
\begin{equation}
\begin{aligned}
    \vtheta^{AO} = \arg \max_{\vtheta \in \mathcal{S}} \quad & \vtheta^\top \boldsymbol{\lambda}, 
\end{aligned}
\label{eq:ao-lp}
\end{equation}
\normalsize
\fi 
where $\boldsymbol{\lambda} = \mathbb{E}[\mathcal{E}_t \mathcal{E}_t^\top]\vtheta^*$.

For the full derivation, see \Cref{sec:AO-derivation}. Note that while the principal never directly observes $\mathbb{E}[\mathcal{E}_t \mathcal{E}_t^\top]$ nor $\vtheta^*$, they estimate $\boldsymbol{\lambda} = \mathbb{E}[\mathcal{E}_t \mathcal{E}_t^\top]\vtheta^*$ during the second stage of the 2SLS procedure. Therefore, if the principal has already run 2SLS to recover a sufficiently accurate estimate of the causal parameters $\vtheta^*$, they can estimate the agent outcome-maximizing decision rule by solving the above optimization.

\section{Experiments}\label{sec:experiment}

We empirically evaluate our model on a semi-synthetic dataset inspired by our running university admissions example. We compare our 2SLS-based method against ordinary least squares (OLS), which directly regresses observed outcomes $y$ on observable features $\mathbf{x}$. We show that even in our stylized setting with just two observable features, OLS does not recover $\vtheta^*$, whereas our method does.%\footnote{Our code is available at \url{https://github.com/logan-stapleton/strategic-instrumental-variable-regression}.} 

\begin{figure}
    \centering
    \includegraphics[width=0.5\linewidth]{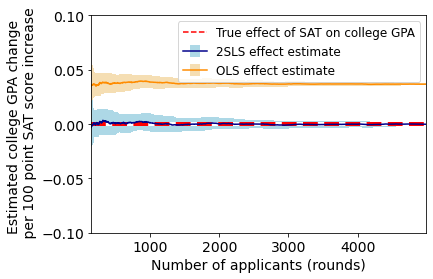}
    \caption{OLS versus 2SLS estimates for SAT effect on college GPA over 5000 rounds. Results are averaged over 10 runs, with the error bars (in lighter colors) representing one standard deviation. The red dashed line is the true causal relationship between SAT score and college GPA.}
    \label{fig:sat-estimates}
\end{figure}

\iffalse 
\begin{figure}[ht]
    \centering
    \begin{subfigure}[t]{0.45\textwidth}
        \centering
        \includegraphics[width=0.95\textwidth]{figures/college-admission-SAT-plot.png}
        \caption{OLS versus 2SLS estimates for SAT effect on college GPA over 5000 rounds. Results are averaged over 10 runs, with the error bars (in lighter colors) representing one standard deviation. The red dashed line is the true causal relationship between SAT score and college GPA.}
        \label{fig:sat-estimates}
    \end{subfigure}
    \hfill
    \begin{subfigure}[t]{0.45\textwidth}
        \centering
        \includegraphics[width=\linewidth]{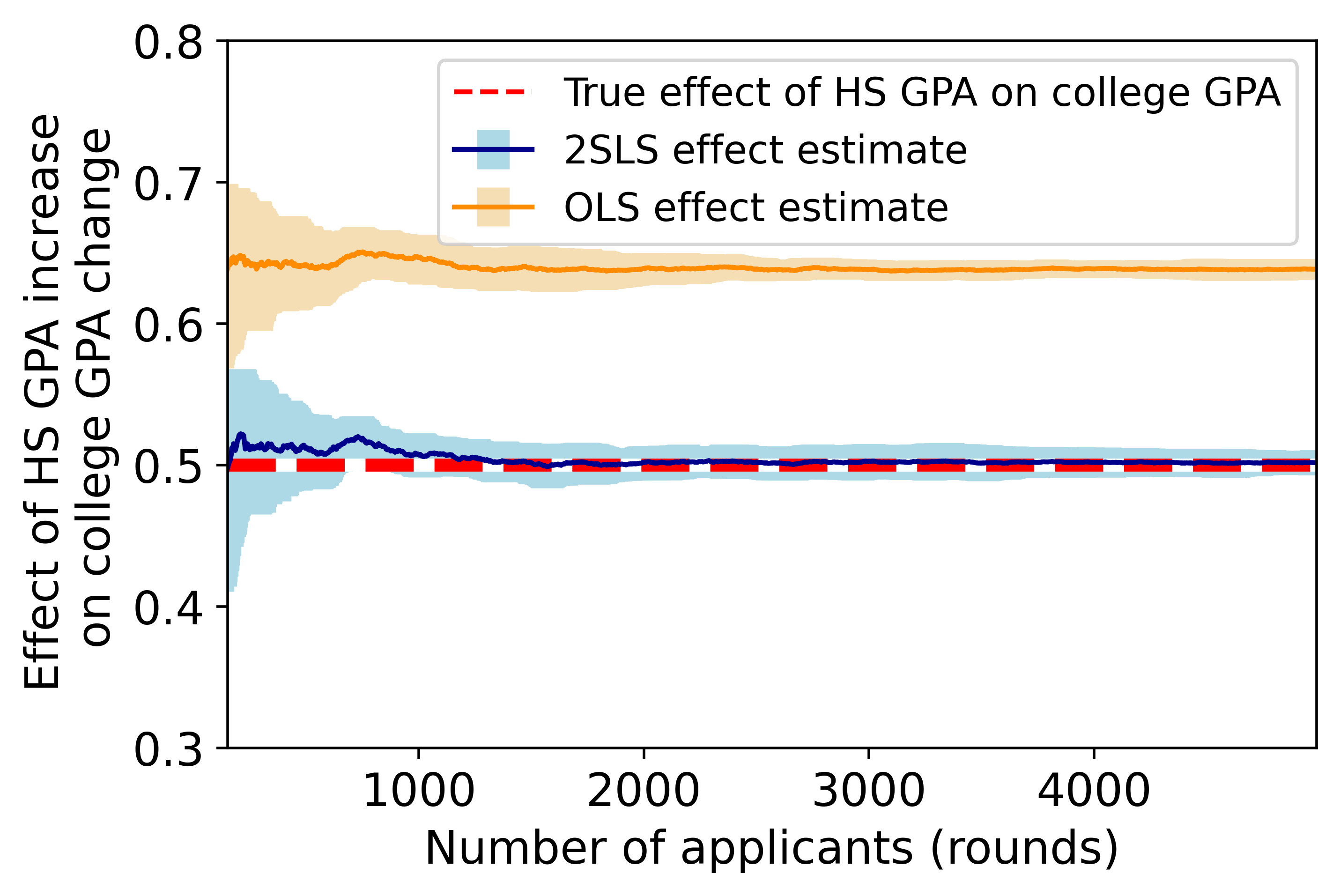}
         \caption{OLS versus 2SLS estimates for high school GPA effect on college GPA over 5000 rounds. Results are averaged over 10 runs, with the error bars (in lighter colors) representing one standard deviation. The red dashed line is the true causal relationship between high school GPA and college GPA.}
         \label{fig:gpa-estimates}
         %\label{fig:estimation_error}
    \end{subfigure}
    \caption{Evaluation of strategic IV regression on our semi-synthetic university admissions data. Left: True effect of SAT on college GPA vs. OLS and 2SLS estimates. Right: True effect of high school GPA on college GPA vs. OLS and 2SLS estimates.}
    \label{fig:admission}
\end{figure}
\fi 

\begin{figure}[ht]
    \centering
    \includegraphics[width=0.5\linewidth]{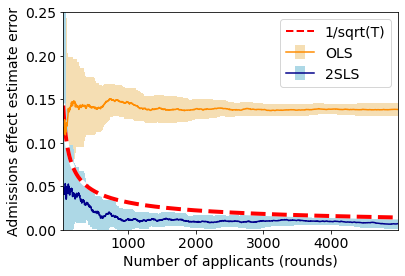}
    \caption{OLS effect estimate error $\|\widehat{\vtheta}_{\text{OLS}} - \vtheta^* \|_2$ (in orange) and 2SLS estimate error $\|\widehat{\vtheta}_{\text{2SLS}} - \vtheta^* \|_2$ (in blue) over 5000 rounds. Results are averaged over 10 runs. Error bars (in lighter colors) represent one standard deviation. 2SLS estimate error decreases at a rate of about $\frac{1}{\sqrt{T}}$ (red dashed line). }
    \label{fig:estimation_error}
\end{figure}

\xhdr{University admissions experimental description} We constructed a semi-synthetic dataset based on the SATGPA dataset, a collection of real university admissions data.\footnote{Originally collected by the Educational Testing Service, the SATGPA dataset is publicly available and can be found here: \url{https://www.openintro.org/data/index.php?data=satgpa}.} The SATGPA dataset contains 6 variables on 1000 students. We use the following: two features (high school (HS) GPA and SAT score) and an outcome (college GPA). Using OLS (which is assumed to be consistent since we have yet to modify the data to include confounding), we find that the effect of $[\text{SAT}, \text{HS GPA}]$ on college GPA in this dataset is $\vtheta^*=[0.0015, 0.5895]^\top$. We then construct synthetic data that is based on this original data, yet incorporates confounding factors. For simplicity, we let the true effect $\vtheta^*=[0, 0.5]^\top$. That is, we assume HS GPA is causally related to college GPA, but SAT score is not.\footnote{Though this assumption may be contentious, it is based on existing research (e.g.,  \citet{allensworth2020gpasat}).} We consider two private types of applicant backgrounds: \textit{disadvantaged} and \textit{advantaged}. Disadvantaged applicants have lower initial HS GPA and SAT ($\vb$), lower baseline college GPA ($o$), and need more effort to improve observable features ($\mathcal{E}$).\footnote{For example, this could be due to the disadvantaged group being systemically underserved or marginalized (and the converse for advantaged group).} Each applicants' initial features are randomly drawn from one of two Gaussian distributions, depending on background. Applicants may manipulate both of their features. See \Cref{sec:omitted-experiments} for a full experimental description.

%\xhdr{Results.} In \Cref{fig:sat-estimates} and \Cref{fig:gpa-estimates}, we compare the true effect of SAT score and high school GPA on college GPA ($\vtheta^*$) with the estimates of these quantities given by our method of 2SLS from \Cref{sec:main} ($\hat{\vtheta}_{\text{2SLS}}$) and with the estimates given by OLS ($\hat{\vtheta}_{\text{OLS}}$). In \Cref{fig:estimation_error}, we compare the estimation errors of OLS and 2SLS, i.e.   $\|\widehat{\vtheta}_{\text{OLS}} - \vtheta^* \|_2$ and $\|\widehat{\vtheta}_{\text{2SLS}} - \vtheta^* \|_2$. 

\xhdr{Results.} In \Cref{fig:sat-estimates}, we compare the true effect of SAT score on college GPA ($\vtheta^*$) with the estimates of these quantities given by our method of 2SLS from \Cref{sec:main} ($\hat{\vtheta}_{\text{2SLS}}$) and with the estimates given by OLS ($\hat{\vtheta}_{\text{OLS}}$). (An analogous figure for the effects of HS GPA is included in the appendix.) In \Cref{fig:estimation_error}, we compare the estimation errors of OLS and 2SLS, i.e.   $\|\widehat{\vtheta}_{\text{OLS}} - \vtheta^* \|_2$ and $\|\widehat{\vtheta}_{\text{2SLS}} - \vtheta^* \|_2$. 

We find that our 2SLS method converges to the true causal relationship (at a rate of about $\frac{1}{\sqrt{T}}$), whereas OLS has a constant bias. Although our setting assumes that SAT score has no causal relationship with college GPA, OLS mistakenly predicts that, on average, a 100 point increase in SAT score leads to about a 0.05 point increase in college GPA. If SAT were not causally related to collegiate performance in real life, these biased estimates could lead universities to erroneously use SAT scores in admissions decisions. This highlights the advantage of our method, since using a naive parameter estimation method like OLS in the presence of confounding could cause decision-making institutions to deploy assessments which don't accurately reflect the characteristics they are trying to measure.

\iffalse
%insert table here
\begin{table}[h]
    \centering
    \begin{tabular}{|c|c|c|}
         \hline & $G = A$ & $G = D$ \\ \hline
         $\widehat{\vtheta} = \hat{\vtheta}_{2SLS}$ & $\E[\hat{y}] = $ & $\E[\hat{y}] = $ \\ \hline
         $\widehat{\vtheta} = \hat{\vtheta}_{OLS}$ & $\E[\hat{y}] = $ & $\E[\hat{y}] = $ \\ \hline
    \end{tabular}
    \caption{}\label{table:fairness}
\end{table}
\fi

\subsection{Predictive Risk Minimization}\label{sec:prm}

Analogous to recovering causal relationships and improving agent outcomes, another common goal of the principal in the strategic learning setting is to \emph{minimize predictive risk}. Formally, the goal of the principal in the predictive risk minimization setting is to learn the assessment rule that minimizes the expected squared difference between an agent's true outcome and the outcome predicted by the principal, i.e., $f(\vtheta_t) = \mathbb{E}[(\widehat{y}_t - y_t)^2]$.

 Due to the dependence of $\vx_t$ and $y_t$ on $\vtheta_t$, $f(\vtheta_t)$ will be non-convex in general, and can have several extrema which are not global minima, even in the case of just one observable feature. When faced with such non-convex optimization problems, gradient descent is often a popular approach due to its simplicity and convergence to local minima in practice.

If the effort conversion matrix $\mathcal{E}$ is the same for all agents, the gradient of population risk function can be written as

\begin{equation*}
    \nabla_{\vtheta_t} f(\vtheta_t) = 2(\mathbb{E}[(\widehat{y}_t - y_t) \vx_t] + \mathbb{E}[\widehat{y}_t - y_t]\mathcal{E}\mathcal{E}^\top (\vtheta_t - \vtheta^*).
\end{equation*}

See Appendix \ref{sec:pop-grad-derivation} for the derivation. In our college admissions example, this would correspond to the setting in which all students' math GPA, SAT scores, etc. improve the same amount given the same effort: this may be a reasonable assumption if the students being considered have the same ability to learn, despite other differences in background they may have. If $\mathcal{E}\mathcal{E}^\top$ is known to the principal (e.g. through the 2SLS procedure in Section \ref{sec:main}), then each $(\vtheta_t, \vx_t, y_t)$ tuple can be used to compute an unbiased estimate of $\nabla_{\vtheta_t} f(\vtheta_t)$ for use in online gradient descent. 
\iffalse
If $\mathcal{E}$ is unknown, the principal can use $\widehat{\Omega}$, the estimate of $\mathcal{E}\mathcal{E}^\top$ computed during the first step of 2SLS, as a consistent estimate of $\mathcal{E}\mathcal{E}^\top$. Note that $\|\widehat{\Omega}(\vtheta_t - \vtheta^*) - \mathcal{E}\mathcal{E}^\top(\vtheta_t - \vtheta^*) \|_2 = \mathcal{O}(\frac{1}{\sqrt{T}})$ can be bounded analogously to Theorem \ref{thm:ao-bound} and Corollary \ref{cor:ao-bound} by following the same steps (where $T$ refers to the number of agents seen in the 2SLS estimate).
\fi

Recent work on \textit{performative prediction} \cite{perdomo2020performative, mendler2020stochastic, miller2021outside} examines the use of repeated gradient descent in the strategic learning setting and finds that repeated gradient descent generally converges to \emph{performatively stable} points. There is no direct comparison between performatively stable points and local minima in our setting. In fact, performatively stable points can actually \emph{maximize} predictive risk under some settings. (See \citet{miller2021outside} for such an example.) Our methods differ from this line of work because we take $\vx_t$, $y_t$, and $\widehat{y}_t$'s \emph{direct dependence} on the assessment rule $\vtheta_t$ into account when calculating the gradient of the risk function, whereas these performative prediction models (henceforth \textit{simple stochastic gradient descent} or \textit{SSGD}) do not. While SSGD may be satisfactory for some settings, it produces a biased estimate of the gradient in general, which can lead to unexpected behavior under our setting; by contrast, our gradient estimate is unbiased (see \Cref{fig:pp-vs-us}). Even in situations which SSGD \emph{does} get the sign of the gradient correct, it may converge at a much slower rate, due to its incomplete estimate of the gradient (see \Cref{fig:pp-vs-us-convergence-rate} in Appendix \ref{sec:figure-setting}).

\begin{figure}
    \centering
    \includegraphics[width=0.5\linewidth]{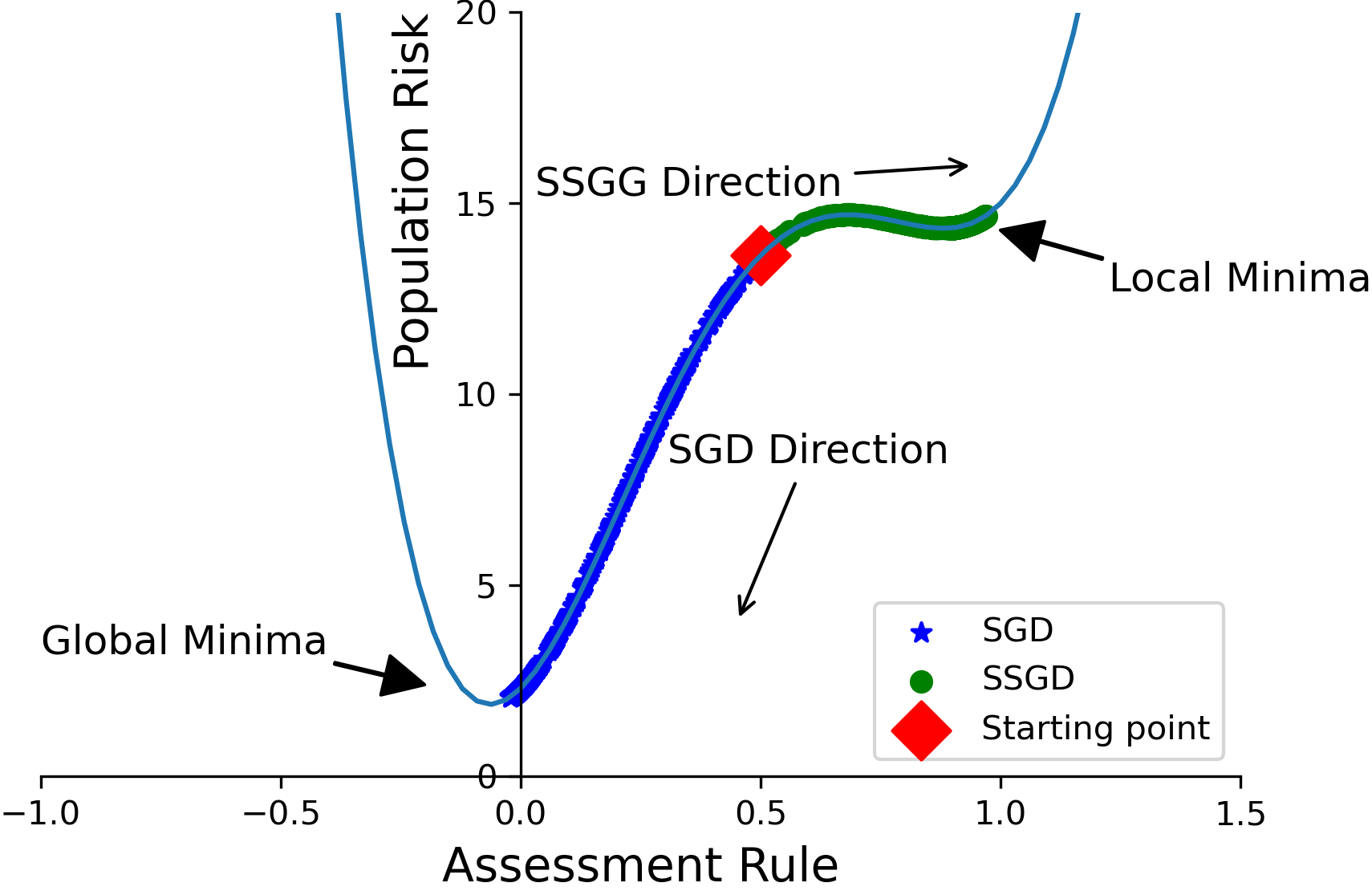}
    \caption{Stochastic Gradient Descent (SGD, takes into account $x_t$, $y_t$, $\widehat{y}_t$'s dependence on $\theta_t$) vs Simple Stochastic Gradient Descent (SSGD, does not). In the 1D setting, it is possible for the gradient of SSGD to have the wrong sign. When both are initialized at $\theta_0 = 0.5$, SGD is able to follow the gradient and converge to the global minima, while SSGD is not. We ran each method for $1000$ time-steps with a decaying learning rate of $\frac{0.001}{\sqrt{T}}$.}
    \label{fig:pp-vs-us}
\end{figure}

\section{Conclusion}

In this work, we establish the possibility of recovering the causal relationship between observable attributes and the outcome of interest in settings where a decision-maker utilizes a series of linear assessment rules to evaluate strategic individuals. Our key observation is that in strategic settings, assessment rules serve as valid instruments (because they causally impact observable attributes but do not directly affect the outcome). This observation enables us to present a 2SLS method to correct for confounding bias in causal estimates. 
%\hhdelete{\khedit{We then examined the disparate impacts that different assessment rules may lead to. In particular, we show that using assessment $\vtheta = \vtheta^*$ to make assessments satisfies individual fairness, and using an assessment $\vtheta \neq \vtheta^*$ can lead to disparities between two agents with the same ability to manipulate causal attributes.}}
We then demonstrate the potential of the recovered causal coefficients to be utilized for preventing individual-level disparities, improving agent outcomes, and reducing predictive risk minimization.

%\xhdr{Broader impacts.} 
%Our work offers a practical approach for inferring causal relationships while employing reasonably accurate decision-making models. Knowledge of causal relationships in social domains can improve the robustness of ML-based decision-making systems to gaming and manipulation; it can facilitate auditing these systems for compliance with policy goals such as fairness; and finally, it allows planning for better societal outcomes. 
%
While our work offers an initial step toward extracting causal knowledge from automated assessment rules, we rely on several simplifying assumptions---all of which mark critical directions for future work. In particular, we assume all assessment rules and the underlying causal model are linear. This assumption allows us to utilize linear IV methods. Extending our work to \emph{non-linear} assessment rules and IV methods is necessary for the applicability of our method to real-world settings. Another critical assumption is the agent's \emph{full knowledge} of the assessment rule and their \emph{rational} response to it, subject to a \emph{quadratic effort cost}. While these are standard assumptions in economic modeling, they need to be empirically verified in the particular decision-making context at hand before our method's outputs can be viewed as reliable estimates of causal relationships.

\iffalse
\begin{itemize}
    \item Non-linear assessment rules and non-linear IV
    \item Different randomizing schemes (e.g., one in which the amount of perturbation needed goes down with time?)
    \item Further connections to performative prediction: is there a one-to-one mapping in our setting between performatively stable points and local minima? (have observed this in practice in 1D)
    \item alternative agent cost formulations (i.e. budget)
\end{itemize}
\fi

\section{Acknowledgements}
ZSW, KH, DN, LS were supported in part by the NSF FAI Award \#1939606, NSF SCC Award \#1952085, a Google Faculty Research Award, a J.P. Morgan Faculty Award, a Facebook Research Award, and a Mozilla Research Grant. HH acknowledges support from NSF (\#IIS2040929), J.P. Morgan, CyLab, Meta, and PwC. Any opinions, findings, conclusions, or recommendations expressed in this material are those of the authors and do not necessarily reflect the views of the National Science Foundation and other funding agencies.
Finally, the authors would like to thank Yonadav Shavit and John Miller for insightful conversations about \citet{shavit2020strategic} and \citet{miller2021outside}, respectively.

\bibliography{refs}
\bibliographystyle{icml2022}

%%%%%%%%%%%%%%%%%%%%%%%%%%%%%%%%%%%%%%%%%%%%%%%%%%%%%%%%%%%%%%%%%%%%%%%%%%%%%%%
%%%%%%%%%%%%%%%%%%%%%%%%%%%%%%%%%%%%%%%%%%%%%%%%%%%%%%%%%%%%%%%%%%%%%%%%%%%%%%%
% APPENDIX
%%%%%%%%%%%%%%%%%%%%%%%%%%%%%%%%%%%%%%%%%%%%%%%%%%%%%%%%%%%%%%%%%%%%%%%%%%%%%%%
%%%%%%%%%%%%%%%%%%%%%%%%%%%%%%%%%%%%%%%%%%%%%%%%%%%%%%%%%%%%%%%%%%%%%%%%%%%%%%%
\newpage
\appendix
\section{Parameter estimation in the causal setting}\label{sec:IV}

\subsection{Ordinary least squares is not consistent} \label{sec:ols}

The least-squares estimate of $\vtheta^*$ is given as 
\begin{equation*}
    \widehat{\vtheta}_{LS} = \left(\sum_{t=1}^T \vx_t \vx_t^\top \right)^{-1} \sum_{t=1}^T \vx_t y_t.
\end{equation*}

However, $\widehat{\vtheta}_{LS}$ is not a consistent estimator of $\vtheta^*$. To see this, let us plug in our expression for $y_t$ into our expression for $\widehat{\vtheta}_{LS}$. We get
\begin{equation*}
\widehat{\vtheta}_{LS} = \left(\sum_{t=1}^T \vx_t \vx_t^\top \right)^{-1} \sum_{t=1}^T \vx_t (\vx_t^\top \vtheta^* + o_t)
\end{equation*}

After distributing terms and simplifying, we get 

\begin{equation*}
    \widehat{\vtheta}_{LS} = \vtheta^* + \left(\sum_{t=1}^T \vx_t \vx_t^\top \right)^{-1} \sum_{t=1}^T \vx_t o_t.
\end{equation*}

$\vx_t$ and $o_t$ are not independent due to their shared dependence on the agent's private type $u_t$. Because of this, $\left(\sum_{t=1}^T \vx_t \vx_t^\top \right)^{-1}\sum_{t=1}^T \vx_t o_t$ will generally not equal $\mathbf{0}_m$, even as the number of data points (agents) grows large. To see this, recall that $\vx_t = \vb_t + \mathcal{E}_t \va_t$, so $\sum_{t=1}^T \vx_t o_t = \sum_{t=1}^T (\vb_t + \mathcal{E}_t \mathcal{E}_t^\top \vtheta_t) o_t$. $o_t$ and $\vb_t$ are both determined by the agent's private type. Take the example where $\vb_t = [o_t, 0, \ldots, 0]^\top$. In this setting, $\sum_{t=1}^T \vb_t o_t = [o_t^2, 0, \ldots, 0]^\top$, which will always be greater than 0 unless $o_t = 0$, $\forall t$.

\subsection{2SLS derivations} \label{sec:2SLS-derivations}
Define $\widetilde{\vtheta}_t = \begin{bmatrix} \vtheta_t\\ 1\end{bmatrix}$. $\vx_t$ can now be written as $\vx_t = \begin{bmatrix} \mathcal{E}_t \mathcal{E}_t^\top & \vb_t\end{bmatrix} \begin{bmatrix} \vtheta_t\\ 1 \end{bmatrix}$. 
\begin{lemma}
    Using OLS, we can estimate $\begin{bmatrix}\mathbb{E}[\mathcal{E}_t \mathcal{E}_t^\top]\\ \mathbb{E}[\vb_t]^\top \end{bmatrix}$ as 
    
    \begin{equation*}
    \begin{aligned}
        \begin{bmatrix}\widehat{\Omega}\\ \Bar{\vb}^\top \end{bmatrix} &= \left(\sum_{t=1}^T \widetilde{\vtheta}_t \widetilde{\vtheta}_t^\top \right)^{-1} \sum_{t=1}^T \widetilde{\vtheta}_t \vx_t^\top\\ &= \left(\sum_{t=1}^T \widetilde{\vtheta}_t \widetilde{\vtheta}_t^\top \right)^{-1} \begin{bmatrix} \sum_{t=1}^T \vtheta_t \vx_t^\top \\ \sum_{t=1}^T \vx_t^\top \end{bmatrix},
    \end{aligned}
    \end{equation*}
    
    where $\widehat{\Omega} = \left(\sum_{t=1}^T \vtheta_t \vtheta_t^\top \right)^{-1} \sum_{t=1}^T \vtheta_t (\vx_t - \Bar{\vb})^\top$.
\end{lemma}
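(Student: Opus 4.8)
The plan is to recognize the displayed formula as nothing more than the closed form of a multivariate OLS regression of the observable features on the augmented instrument, and then to read off the first block of the corresponding normal equations. First I would record the structural form of the data. Since the agent best response is $\va_t = \mathcal{E}_t^\top \vtheta_t$, we have $\vx_t = \vb_t + \mathcal{E}_t\mathcal{E}_t^\top\vtheta_t = \begin{bmatrix}\mathcal{E}_t\mathcal{E}_t^\top & \vb_t\end{bmatrix}\widetilde{\vtheta}_t$. Transposing and using the symmetry of $\mathcal{E}_t\mathcal{E}_t^\top$ gives $\vx_t^\top = \widetilde{\vtheta}_t^\top \begin{bmatrix}\mathcal{E}_t\mathcal{E}_t^\top \\ \vb_t^\top\end{bmatrix}$, which exhibits $\vx_t^\top$ as a linear function of $\widetilde{\vtheta}_t$ with stacked coefficient matrix $\begin{bmatrix}\mathcal{E}_t\mathcal{E}_t^\top \\ \vb_t^\top\end{bmatrix}$ (whose expectation is the target $\begin{bmatrix}\Omega \\ \E[\vb_t]^\top\end{bmatrix}$). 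The first equality in the lemma is then exactly the standard OLS estimator $\widehat{\Theta} = \left(\sum_t \widetilde{\vtheta}_t\widetilde{\vtheta}_t^\top\right)^{-1}\sum_t \widetilde{\vtheta}_t\vx_t^\top$ for this regression, with $\widehat{\Theta} = \begin{bmatrix}\widehat{\Omega} \\ \Bar{\vb}^\top\end{bmatrix}$ (this uses the assumed invertibility of $\sum_t \widetilde{\vtheta}_t\widetilde{\vtheta}_t^\top$).

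The second equality is immediate: partitioning $\widetilde{\vtheta}_t = \begin{bmatrix}\vtheta_t \\ 1\end{bmatrix}$ gives $\widetilde{\vtheta}_t\vx_t^\top = \begin{bmatrix}\vtheta_t\vx_t^\top \\ \vx_t^\top\end{bmatrix}$, and summing over $t$ block-wise yields the stated stacked matrix $\begin{bmatrix}\sum_t \vtheta_t\vx_t^\top \\ \sum_t \vx_t^\top\end{bmatrix}$.

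For the expression for $\widehat{\Omega}$, I would left-multiply the estimator by the Gram matrix to obtain the normal equations $\left(\sum_t \widetilde{\vtheta}_t\widetilde{\vtheta}_t^\top\right)\widehat{\Theta} = \sum_t \widetilde{\vtheta}_t\vx_t^\top$, and then extract the top $m$ rows, i.e.\ those premultiplied by $\vtheta_t$ rather than by the appended constant $1$. Using $\widetilde{\vtheta}_t^\top\widehat{\Theta} = \vtheta_t^\top\widehat{\Omega} + \Bar{\vb}^\top$, the top block reads $\left(\sum_t \vtheta_t\vtheta_t^\top\right)\widehat{\Omega} + \left(\sum_t \vtheta_t\right)\Bar{\vb}^\top = \sum_t\vtheta_t\vx_t^\top$. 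Rearranging and inverting $\sum_t\vtheta_t\vtheta_t^\top$ gives $\widehat{\Omega} = \left(\sum_t\vtheta_t\vtheta_t^\top\right)^{-1}\sum_t\vtheta_t(\vx_t - \Bar{\vb})^\top$, as claimed.

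The computation is entirely routine, so there is no substantive obstacle; the only points requiring care are the transpose bookkeeping—in particular invoking the symmetry of $\mathcal{E}_t\mathcal{E}_t^\top$ so that the slope block is $\mathcal{E}_t\mathcal{E}_t^\top$ itself and not its transpose—and the observation that the displayed identity for $\widehat{\Omega}$ is a characterization \emph{through} the intercept estimate $\Bar{\vb}$ rather than a fully reduced closed form, since $\Bar{\vb}$ is itself determined jointly by the second block of the normal equations. If a self-contained closed form were instead desired, I would apply the Frisch--Waugh--Lovell theorem (partialling out the constant by demeaning) to write $\widehat{\Omega} = \left(\sum_t(\vtheta_t - \Bar{\vtheta})(\vtheta_t - \Bar{\vtheta})^\top\right)^{-1}\sum_t(\vtheta_t-\Bar{\vtheta})(\vx_t - \Bar{\vx})^\top$, but this stronger form is not needed for the statement at hand.
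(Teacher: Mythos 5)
Your proposal is correct, and it reaches the identity for $\widehat{\Omega}$ by a genuinely different route than the paper. The paper's proof invokes a block matrix inversion fact: it writes the Gram matrix $\sum_t \widetilde{\vtheta}_t\widetilde{\vtheta}_t^\top$ as $\begin{bmatrix} A & B\\ C & D\end{bmatrix}$ with $A = \sum_t \vtheta_t\vtheta_t^\top$, $B = \sum_t \vtheta_t$, $C = B^\top$, $D = T$, explicitly inverts it via the Schur complement $E = D - CA^{-1}B$, writes out both $\Bar{\vb}^\top$ and $\widehat{\Omega}$ as products of blocks of this inverse with the stacked right-hand side, and then recognizes the expression for $\Bar{\vb}^\top$ sitting inside the expression for $\widehat{\Omega}$ to collapse it to $A^{-1}\sum_t \vtheta_t \vx_t^\top - A^{-1}B\Bar{\vb}^\top$. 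You instead avoid computing the inverse altogether: you left-multiply the OLS estimator by the Gram matrix to obtain the normal equations, extract the top block $\left(\sum_t \vtheta_t\vtheta_t^\top\right)\widehat{\Omega} + \left(\sum_t \vtheta_t\right)\Bar{\vb}^\top = \sum_t\vtheta_t\vx_t^\top$, and rearrange. Your route is more elementary and sidesteps the paper's side condition that the Schur complement $E$ be nonzero (though both arguments still need $\sum_t \vtheta_t\vtheta_t^\top$ itself invertible, an assumption the paper also makes); the paper's route buys, as a byproduct, an explicit standalone formula for the intercept block $\Bar{\vb}^\top$, which you only characterize implicitly. Your closing observations---that the identity expresses $\widehat{\Omega}$ \emph{through} $\Bar{\vb}$ rather than as a reduced closed form, and that a Frisch--Waugh--Lovell demeaned form would give the latter---are accurate and go slightly beyond what the paper records.
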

\begin{proof}
In order to calculate $\widehat{\Omega}$, we will make use of the following fact:
\begin{fact}[Block Matrix Inversion (\cite{bernstein2009matrix})]
If a matrix $P$ is partitioned into four blocks, it can be inverted blockwise as follows:
\begin{equation*}
\begin{aligned}
        P &= \begin{bmatrix} A & B\\ C & D\end{bmatrix}^{-1}\\ &= \begin{bmatrix} A^{-1} + A^{-1}BE^{-1} CA^{-1} & -A^{-1}BE^{-1}\\ -E^{-1} CA^{-1} & E^{-1}\end{bmatrix},
\end{aligned}
\end{equation*}
where A and D are square matrices of arbitrary size, and B and C are conformable for partitioning. Furthermore, A and the Schur complement of A in P ($E = D-CA^{-1}B$) must be invertible.
\end{fact}

Let $A = \sum_{t=1}^T \vtheta_t \vtheta_t^\top$, $B = \sum_{t=1}^T \vtheta_t$, $C = \sum_{t=1}^T \vtheta_t^\top$, and $D = \sum_{t=1}^T 1 = T$. Note that $A$ is invertible by assumption and $E$ is a scalar, so is trivially invertible unless $CA^{-1}B = T$.

Using this formulation, observe that
\begin{equation*}
    \Bar{\vb}^\top  = -E^{-1} CA^{-1}\sum_{t=1}^T \vtheta_t \vx_t^\top + E^{-1} \sum_{t=1}^T \vx_t^\top
\end{equation*} and
\begin{equation*}
\begin{aligned}
    \widehat{\Omega} &= A^{-1}\sum_{t=1}^T \vtheta_t \vx_t^\top + A^{-1}BE^{-1} CA^{-1}\sum_{t=1}^T \vtheta_t \vx_t^\top\\ &- A^{-1}BE^{-1} \sum_{t=1}^T \vx_t^\top\\
\end{aligned}
\end{equation*}
Rearranging terms, we see that $\widehat{\boldsymbol{\lambda}}$ can be written as 
\begin{equation*}
\begin{aligned}
    \widehat{\Omega} &= A^{-1}\sum_{t=1}^T \vtheta_t \vx_t^\top\\ &+ A^{-1}B (E^{-1} CA^{-1}\sum_{t=1}^T \vtheta_t \vx_t^\top - E^{-1} \sum_{t=1}^T \vx_t^\top)\\
    &= A^{-1}\sum_{t=1}^T \vtheta_t \vx_t^\top - A^{-1}B \Bar{\vb}^\top
\end{aligned}
\end{equation*}

Finally, plugging in for $A$ and $B$, we see that 
\begin{equation*}
\begin{aligned}
    \widehat{\Omega} &= \left(\sum_{t=1}^T \vtheta_t \vtheta_t^\top \right)^{-1}\sum_{t=1}^T \vtheta_t \vx_t^\top\\ &- \left(\sum_{t=1}^T \vtheta_t \vtheta_t^\top \right)^{-1}\sum_{t=1}^T \vtheta_t \Bar{\vb}^\top\\
    &= \left(\sum_{t=1}^T \vtheta_t \vtheta_t^\top \right)^{-1}\sum_{t=1}^T \vtheta_t (\vx_t - \Bar{z})^\top
\end{aligned}
\end{equation*}
\end{proof}

Similarly, we can write $y_t$ as $y_t = \begin{bmatrix}\vtheta_t^\top & 1 \end{bmatrix} \begin{bmatrix}\mathcal{E}_t \mathcal{E}_t^\top \vtheta^*\\ o_t + \vb_t^\top \vtheta^* \end{bmatrix}$. 

\begin{lemma}
    Using OLS, we can estimate $\begin{bmatrix} \mathbb{E}[\mathcal{E}_t \mathcal{E}_t^\top] \vtheta^*\\ \mathbb{E}[o_t] + \mathbb{E}[\vb_t^\top] \vtheta^* \end{bmatrix}$ as 
    \begin{equation*}
    \begin{aligned}
        \begin{bmatrix}\widehat{\boldsymbol{\lambda}}\\ \Bar{o} + \Bar{\vb}^\top \vtheta^* \end{bmatrix} &= \left(\sum_{t=1}^T \widetilde{\vtheta}_t \widetilde{\vtheta}_t^\top \right)^{-1} \sum_{t=1}^T \widetilde{\vtheta}_t y_t\\ &= \left(\sum_{t=1}^T \widetilde{\vtheta}_t \widetilde{\vtheta}_t^\top \right)^{-1} \begin{bmatrix} \sum_{t=1}^T \vtheta_t y_t^\top \\ \sum_{t=1}^T y_t^\top \end{bmatrix},
    \end{aligned}
    \end{equation*}
    where $\widehat{\boldsymbol{\lambda}} = \left(\sum_{t=1}^T \vtheta_t \vtheta_t^\top \right)^{-1} \sum_{t=1}^T \vtheta_t (y_t - \Bar{o} - \Bar{\vb}^\top \vtheta^*)$.
\end{lemma}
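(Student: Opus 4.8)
The plan is to mirror the proof of the preceding lemma almost verbatim, exploiting the observation (stated just above the lemma) that the scalar outcome admits the bilinear representation $y_t = \widetilde{\vtheta}_t^\top \begin{bmatrix} \mathcal{E}_t \mathcal{E}_t^\top \vtheta^* & o_t + \vb_t^\top \vtheta^* \end{bmatrix}^\top$. This identity is what makes the second-stage regression of $y_t$ on $\widetilde{\vtheta}_t$ structurally identical to the first-stage regression of $\vx_t$ on $\widetilde{\vtheta}_t$: there, the ``coefficient'' was the stacked block $\begin{bmatrix} \mathcal{E}_t \mathcal{E}_t^\top & \vb_t \end{bmatrix}$, and here it is the stacked block $\begin{bmatrix} \mathcal{E}_t \mathcal{E}_t^\top \vtheta^* & o_t + \vb_t^\top \vtheta^* \end{bmatrix}^\top$. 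I would first verify this representation by substituting the best response $\va_t = \mathcal{E}_t^\top \vtheta_t$ and $\vx_t = \vb_t + \mathcal{E}_t \va_t$ into the outcome model $y_t = \vx_t^\top \vtheta^* + o_t$, which gives $y_t = \vtheta_t^\top \mathcal{E}_t \mathcal{E}_t^\top \vtheta^* + (o_t + \vb_t^\top \vtheta^*)$, exactly the claimed inner product.

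Next I would invoke the same Block Matrix Inversion fact with the identical partition $A = \sum_{t=1}^T \vtheta_t \vtheta_t^\top$, $B = \sum_{t=1}^T \vtheta_t$, $C = \sum_{t=1}^T \vtheta_t^\top$, $D = T$, and Schur complement $E = D - C A^{-1} B$, all of which are unchanged because the regressor matrix $\sum_{t=1}^T \widetilde{\vtheta}_t \widetilde{\vtheta}_t^\top$ is the same as in the first lemma. The only quantity that changes is the right-hand side, which is now $\sum_{t=1}^T \widetilde{\vtheta}_t y_t = \begin{bmatrix} \sum_{t=1}^T \vtheta_t y_t & \sum_{t=1}^T y_t \end{bmatrix}^\top$ in place of $\sum_{t=1}^T \widetilde{\vtheta}_t \vx_t^\top$. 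Carrying out the blockwise multiplication then yields, from the bottom block, $\Bar{o} + \Bar{\vb}^\top \vtheta^* = -E^{-1} C A^{-1} \sum_{t=1}^T \vtheta_t y_t + E^{-1} \sum_{t=1}^T y_t$ (the exact analog of the expression for $\Bar{\vb}^\top$), and from the top block the expression for $\widehat{\boldsymbol{\lambda}}$ in terms of $A^{-1}\sum_{t=1}^T \vtheta_t y_t$ plus the cross term $A^{-1} B (\,\cdot\,)$.

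Finally I would perform the same rearrangement as before to collapse the cross term: recognizing that the parenthesized factor multiplying $A^{-1}B$ equals precisely $\Bar{o} + \Bar{\vb}^\top \vtheta^*$, I obtain $\widehat{\boldsymbol{\lambda}} = A^{-1} \sum_{t=1}^T \vtheta_t y_t - A^{-1} B (\Bar{o} + \Bar{\vb}^\top \vtheta^*)$, and substituting $A = \sum_{t=1}^T \vtheta_t \vtheta_t^\top$ and $B = \sum_{t=1}^T \vtheta_t$ gives the stated closed form $\widehat{\boldsymbol{\lambda}} = \left(\sum_{t=1}^T \vtheta_t \vtheta_t^\top\right)^{-1} \sum_{t=1}^T \vtheta_t (y_t - \Bar{o} - \Bar{\vb}^\top \vtheta^*)$. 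I expect there to be essentially no genuine obstacle here, since the argument is a direct transcription of the first lemma; the only points requiring care are the scalar-versus-vector bookkeeping (the notation $y_t^\top$ simply treats the scalar outcome as a $1 \times 1$ block, so the stacked right-hand side is well typed) and, separately, the identification claim that these OLS estimators target the population quantities $\E[\mathcal{E}_t \mathcal{E}_t^\top]\vtheta^*$ and $\E[o_t] + \E[\vb_t^\top]\vtheta^*$, which follows from the standard consistency of OLS under the instrument validity conditions established in \Cref{sec:main} rather than from the algebraic decomposition itself.
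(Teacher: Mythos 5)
Your proposal is correct and follows essentially the same route as the paper's own proof: the paper likewise reuses the Block Matrix Inversion fact with the identical partition $A = \sum_t \vtheta_t\vtheta_t^\top$, $B = \sum_t \vtheta_t$, $C = B^\top$, $D = T$, reads off $\Bar{o} + \Bar{\vb}^\top\vtheta^*$ from the bottom block, and collapses the cross term to get $\widehat{\boldsymbol{\lambda}} = A^{-1}\sum_t \vtheta_t\bigl(y_t - \Bar{o} - \Bar{\vb}^\top\vtheta^*\bigr)$. The only nitpick is a sign slip in your wording (the parenthesized factor following the ``$+\,A^{-1}B(\cdot)$'' convention is the \emph{negative} of $\Bar{o} + \Bar{\vb}^\top\vtheta^*$), but your final expression carries the correct minus sign and matches the paper exactly.
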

\begin{proof}
The proof follows similarly to the proof of the previous lemma. Let $A = \sum_{t=1}^T \vtheta_t \vtheta_t^\top$, $B = \sum_{t=1}^T \vtheta_t$, $C = \sum_{t=1}^T \vtheta_t^\top$, and $D = \sum_{t=1}^T 1 = T$. Note that $A$ is invertible by assumption and $E$ is a scalar, so is trivially invertible unless $CA^{-1}B = T$.

Using this formulation, observe that
\begin{equation*}
    \Bar{o}^\top + \Bar{z}^\top \vtheta^*  = -E^{-1} CA^{-1}\sum_{t=1}^T \vtheta_t y_t + E^{-1} \sum_{t=1}^T y_t
\end{equation*} and
\begin{equation*}
\begin{aligned}
    \widehat{\boldsymbol{\lambda}} &= A^{-1}\sum_{t=1}^T \vtheta_t y_t\\ &+ A^{-1}B \left(E^{-1} CA^{-1}\sum_{t=1}^T \vtheta_t y_t - E^{-1} \sum_{t=1}^T y_t \right)\\
    &= A^{-1}\sum_{t=1}^T \vtheta_t y_t - A^{-1}B \left(\Bar{o}^\top + \Bar{z}^\top \vtheta^* \right)\\
    &= \left(\sum_{t=1}^T \vtheta_t \vtheta_t^\top \right)^{-1} \sum_{t=1}^T \vtheta_t \left(y_t - \Bar{o}^\top - \Bar{z}^\top \vtheta^* \right)
\end{aligned}
\end{equation*}
\end{proof}

\begin{theorem}
    We can estimate $\vtheta^*$ as 
    \begin{equation*}
        \widehat{\vtheta} = \widehat{\Omega}^{-1} \widehat{\boldsymbol{\lambda}} = \left(\sum_{t=1}^T \vtheta_t (\vx_t - \Bar{\vb})^\top \right)^{-1} \sum_{t=1}^T \vtheta_t (y_t - \Bar{o} - \Bar{z}^\top \vtheta^*)
    \end{equation*}
\end{theorem}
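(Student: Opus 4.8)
The plan is to prove the identity purely algebraically by substituting the closed forms for $\widehat{\Omega}$ and $\widehat{\boldsymbol{\lambda}}$ established in the two preceding lemmas into the defining expression $\widehat{\vtheta} = \widehat{\Omega}^{-1}\widehat{\boldsymbol{\lambda}}$. The crucial structural observation is that both estimators carry the \emph{same} left factor $M := \left(\sum_{t=1}^T \vtheta_t \vtheta_t^\top\right)^{-1}$. Writing $S_x := \sum_{t=1}^T \vtheta_t (\vx_t - \Bar{\vb})^\top$ and $S_y := \sum_{t=1}^T \vtheta_t (y_t - \Bar{o} - \Bar{\vb}^\top \vtheta^*)$, the lemmas read $\widehat{\Omega} = M S_x$ and $\widehat{\boldsymbol{\lambda}} = M S_y$.

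First I would apply the reverse-order law for inverses to $\widehat{\Omega} = M S_x$, obtaining $\widehat{\Omega}^{-1} = S_x^{-1} M^{-1}$. Substituting then gives
\begin{equation*}
\widehat{\vtheta} = \widehat{\Omega}^{-1}\widehat{\boldsymbol{\lambda}} = S_x^{-1} M^{-1} M S_y = S_x^{-1} S_y,
\end{equation*}
since $M^{-1}M$ collapses to the identity. Unpacking $S_x$ and $S_y$ recovers exactly the claimed formula
\begin{equation*}
\widehat{\vtheta} = \left(\sum_{t=1}^T \vtheta_t (\vx_t - \Bar{\vb})^\top\right)^{-1}\sum_{t=1}^T \vtheta_t (y_t - \Bar{o} - \Bar{\vb}^\top \vtheta^*).
\end{equation*}

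There is no genuine analytic difficulty here; the content is bookkeeping. The one point requiring care---and the only thing I would flag as a potential obstacle---is justifying that each inverse in the chain actually exists and that the reverse-order law applies. The standing assumption that $\sum_{t=1}^T \widetilde{\vtheta}_t \widetilde{\vtheta}_t^\top$ is invertible guarantees (via the block-inversion fact used in the lemmas) that $M$ is a well-defined invertible $m \times m$ matrix, so $M^{-1}$ exists and the cancellation $M^{-1}M = \bI$ is legitimate. Invertibility of $\widehat{\Omega}$---equivalently of $S_x$, since $M$ is invertible---is precisely what is assumed when forming $\widehat{\Omega}^{-1}$ in Step 3 of the 2SLS procedure. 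I would also note in passing that $\Bar{z}$ and $\Bar{\vb}$ denote the same quantity, so the mixed notation appearing in the statement is immaterial.
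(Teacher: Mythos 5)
Your proposal is correct and coincides with the paper's own argument: the paper's entire proof is ``This follows immediately from the previous two lemmas,'' and the cancellation $\widehat{\Omega}^{-1}\widehat{\boldsymbol{\lambda}} = S_x^{-1}M^{-1}MS_y = S_x^{-1}S_y$ you write out via the reverse-order law is precisely that immediate step made explicit. Your remarks on invertibility (of $M$ and of $S_x$, the latter being what Step 3 of the 2SLS procedure presupposes) and on $\Bar{z}$ and $\Bar{\vb}$ denoting the same quantity match the paper's standing assumptions and notation.
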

\begin{proof}
This follows immediately from the previous two lemmas.
\end{proof}

\subsection{2SLS is consistent} \label{sec:iv-regression}
Consider the two-stage least squares (2SLS) estimate of $\vtheta^*$,
\begin{equation*}
    \widehat{\vtheta}_{IV} = \left(\sum_{t=1}^T \vtheta_t (\vx_t - \Bar{\vb})^\top \right)^{-1} \sum_{t=1}^T \vtheta_t (y_t - \Bar{o} - \Bar{z}^\top \vtheta^*)
\end{equation*}

Plugging in for $y_t$ and simplifying, we get
\begin{equation*}
    \widehat{\vtheta}_{IV} = \vtheta^* + \left(\sum_{t=1}^T \vtheta_t (\vx_t - \Bar{\vb})^\top \right)^{-1} \sum_{t=1}^T \vtheta_t (o_t - \Bar{o})
\end{equation*}

To see that $\widehat{\vtheta}_{IV}$ \emph{is} a consistent estimator of $\vtheta^*$, we show that $\lim_{T \to \infty} \mathbb{E} \|\widehat{\vtheta}_{IV} - \vtheta^*\|_2^2 = 0$.

\begin{equation*}
    \mathbb{E} \|\widehat{\vtheta}_{IV} - \vtheta^*\|_2^2 = \mathbb{E} \left \|  \left(\sum_{t=1}^T \vtheta_t (\vx_t - \Bar{\vb})^\top \right)^{-1} \sum_{t=1}^T \vtheta_t (o_t - \Bar{o}) \right \|_2^2
\end{equation*}

$o_t - \Bar{o}$ and $\vtheta_t$ are uncorrelated, so $\sum_{t=1}^T \vtheta_t (o_t - \Bar{o})$ will go to zero as $T \rightarrow \infty$. On the other hand, $\sum_{t=1}^T \vtheta_t (\vx_t - \Bar{\vb})^\top$ will approach $T \mathbb{E}[\vtheta_t (\vx_t - \Bar{\vb})^\top]$. $\vtheta_t$ and $\vx_t - \Bar{\vb}$ \emph{are} correlated, so $\mathbb{E}[\vtheta_t (\vx_t - \Bar{\vb})^\top] \neq \mathbf{0}$ in general.
\section{Causal parameter recovery derivations}
\subsection{Proof of Theorem \ref{thm:theta-bound}} \label{sec:theta-bound-proof}

Recall that $\widehat{\vtheta} = \left(\sum_{t=1}^T \vtheta_t (\vx_t - \Bar{\vb})^\top \right)^{-1} \sum_{t=1}^T \vtheta_t (y_t - \Bar{o} - \Bar{\vb}^\top \vtheta^*)$ from Appendix \ref{sec:2SLS-derivations}. Plugging this into $\|\widehat{\vtheta} - \vtheta^*\|_2$, we get
\begin{equation*}
\begin{aligned}
    &\left\| \widehat{\vtheta} - \vtheta^* \right\|_2 =\\ &\left\| \left(\sum_{t=1}^T \vtheta_t (\vx_t - \Bar{\vb})^\top \right)^{-1} \left(\sum_{t=1}^T \vtheta_t (y_t - \Bar{o} - \Bar{\vb}^\top \vtheta^*) \right) - \vtheta^* \right\|_2
\end{aligned}
\end{equation*}
Next, we substitute in our expression for $y_t$ and simplify, obtaining
\begin{equation*} \label{eq:bound-main}
\begin{aligned}
   &\| \widehat{\vtheta} - \vtheta^* \|_2\\
   &= \left\| \left(\sum_{t=1}^T \vtheta_t (\vx_t - \Bar{\vb})^\top \right)^{-1} \right.\\ &\left. \left(\sum_{t=1}^T \vtheta_t (\vx_t^\top \vtheta^* + o_t - \Bar{o} - \Bar{\vb}^\top \vtheta^*) \right) - \vtheta^* \right\|_2\\
   &= \left\| \left(\sum_{t=1}^T \vtheta_t (\vx_t - \Bar{\vb})^\top \right)^{-1} \left(\sum_{t=1}^T \vtheta_t (\vx_t - \Bar{\vb})^\top \vtheta^* \right) \right.\\ &\left.+ \left(\sum_{t=1}^T \vtheta_t (\vx_t- \Bar{\vb})^\top \right)^{-1} \left(\sum_{t=1}^T \vtheta_t (o_t - \Bar{o}) \right) - \vtheta^*\right\|_2\\
   &= \left\| \vtheta^* + \left(\sum_{t=1}^T \vtheta_t (\vx_t - \Bar{\vb})^\top \right)^{-1} \left(\sum_{t=1}^T \vtheta_t (o_t - \Bar{o}) \right) - \vtheta^*\right\|_2\\
   &= \left\|\left(\sum_{t=1}^T \vtheta_t (\vx_t - \Bar{\vb})^\top \right)^{-1} \left(\sum_{t=1}^T \vtheta_t (o_t - \Bar{o}) \right) \right\|_2\\
   & \leq \left\|\left(\sum_{t=1}^T \vtheta_t (\vx_t - \Bar{\vb})^\top \right)^{-1} \right\|_2 \left \|\sum_{t=1}^T \vtheta_t (o_t - \Bar{o})  \right\|_2\\
   & \leq \frac{\left \|\sum_{t=1}^T \vtheta_t (o_t - \Bar{o})  \right\|_2}{\sigma_{min}\left(\sum_{t=1}^T \vtheta_t (\vx_t - \Bar{\vb})^\top \right)}
\end{aligned}
\end{equation*}
We now bound the numerator and denominator separately with high probability.
%\sqrt{\sum_{j=1}^m \left(\sum_{t=1}^T \theta_{t,j} o_t \right)^2}
\subsection{Bound on numerator}
\begin{equation*}
\begin{aligned}
    &\left \|\sum_{t=1}^T \vtheta_t (o_t - \Bar{o})  \right\|_2 = \left \|\sum_{t=1}^T \vtheta_t (o_t - \mathbb{E}[o_t] + \mathbb{E}[o_t] - \Bar{o})  \right\|_2\\
    &\leq \left \|\sum_{t=1}^T \vtheta_t (o_t - \mathbb{E}[o_t]) \right \|_2 + \left \|\sum_{t=1}^T \vtheta_t (\mathbb{E}[o_t] - \Bar{o})  \right\|_2\\
\end{aligned}
\end{equation*}
\subsubsection{Bound on first term}
\begin{equation*}
\begin{aligned}
    &\left \|\sum_{t=1}^T \vtheta_t (o_t - \mathbb{E}[o_t]) \right \|_2\\ &= \left(\sum_{j=1}^m \left(\sum_{t=1}^T \theta_{t,j} (o_t - \mathbb{E}[o_t]) \right)^2\right)^{1/2}
\end{aligned}
\end{equation*}

Since $(o_t - \mathbb{E}[o_t])$ is a zero-mean bounded random variable with variance parameter $\sigma_g^2$, the product $\theta_{t,j} (o_t - \mathbb{E}[o_t])$ will also be a zero-mean bounded random variable with variance at most $\beta^2 \sigma_g^2$. In order to bound $\left(\sum_{j=1}^m \left(\sum_{t=1}^T \theta_{t,j} (o_t - \mathbb{E}[o_t]) \right)^2\right)^{1/2}$ with high probability, we make use of the following lemma. Note that bounded random variables are sub-Gaussian random variables.

\begin{lemma}[High probability bound on the sum of unbounded sub-Gaussian random variables] \label{lemma:chernoff}
Let $x_t \sim \text{subG}(0, \sigma^2)$. For any $\delta \in (0, 1)$, with probability at least $1 - \delta$,
\begin{equation*}
    \Big|\sum_{t=1}^T x_t\Big| \leq \sigma \sqrt{2T\log(1/\delta)}
\end{equation*}
\end{lemma}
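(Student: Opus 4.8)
The plan is to use the standard moment-generating-function (Chernoff) method for sub-Gaussian concentration. First I would recall that, by definition of a sub-Gaussian random variable, each $x_t \sim \text{subG}(0,\sigma^2)$ satisfies $\E[e^{\lambda x_t}] \leq e^{\lambda^2 \sigma^2 / 2}$ for every $\lambda \in \RR$. Assuming the $x_t$ are independent (which holds in the application of this lemma: after conditioning on the fixed, bounded assessment rules, the terms $\theta_{t,j}(o_t - \E[o_t])$ are independent and zero-mean), the MGF of the sum factorizes, so that $\E\big[e^{\lambda \sum_{t} x_t}\big] = \prod_{t=1}^T \E[e^{\lambda x_t}] \leq e^{\lambda^2 T \sigma^2 / 2}$. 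Hence $S_T := \sum_{t=1}^T x_t$ is itself sub-Gaussian with variance proxy $T\sigma^2$.

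Next I would apply Markov's inequality to the exponentiated sum. For any $\lambda > 0$ and threshold $s > 0$,
\begin{equation*}
    \Prob(S_T \geq s) \leq e^{-\lambda s}\,\E[e^{\lambda S_T}] \leq e^{-\lambda s + \lambda^2 T \sigma^2 / 2}.
\end{equation*}
Optimizing the exponent over $\lambda$, i.e.\ taking $\lambda = s/(T\sigma^2)$, yields the tail bound $\Prob(S_T \geq s) \leq e^{-s^2 / (2 T \sigma^2)}$. Setting the right-hand side equal to $\delta$ and solving for $s$ gives $s = \sigma \sqrt{2 T \log(1/\delta)}$, so that $\Prob\big(S_T \geq \sigma\sqrt{2T\log(1/\delta)}\big) \leq \delta$. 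The identical argument applied to $-S_T$ (equivalently, using $\lambda < 0$) bounds the lower tail, and combining the two gives the two-sided statement on $|S_T|$.

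The one genuine subtlety, and essentially the only place the argument needs care, is the constant hidden in the two-sided union bound: summing the two one-sided tails gives $\Prob(|S_T| \geq s) \leq 2 e^{-s^2/(2T\sigma^2)}$, so strictly solving $2 e^{-s^2/(2T\sigma^2)} = \delta$ would produce $s = \sigma\sqrt{2T\log(2/\delta)}$ rather than $\sigma\sqrt{2T\log(1/\delta)}$. The stated bound therefore corresponds to the one-sided tail, or to absorbing the harmless factor of $2$ into the logarithm; I would either restate the lemma with $\log(2/\delta)$ or simply note that this extra constant is immaterial to the $\widetilde{\mathcal{O}}$ rates invoked downstream in Theorem~\ref{thm:theta-bound}. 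No other step presents difficulty: the whole result is a one-line consequence of the sub-Gaussianity of the sum, followed by routine optimization of the Chernoff exponent.
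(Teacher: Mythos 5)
Your proof is correct, and in fact the paper never proves Lemma~\ref{lemma:chernoff} at all---it is invoked as a standard sub-Gaussian concentration fact---so your MGF/Chernoff derivation (factorize the moment generating function over independent terms, apply Markov's inequality to $e^{\lambda S_T}$, optimize $\lambda$) supplies exactly the canonical justification the paper leaves implicit, including the independence hypothesis that the paper's application satisfies. Your caveat about the two-sided bound is also a legitimate catch: as stated, bounding $\big|\sum_t x_t\big|$ at confidence $1-\delta$ requires $\sigma\sqrt{2T\log(2/\delta)}$ rather than $\sigma\sqrt{2T\log(1/\delta)}$, a harmless constant that is absorbed into the $\widetilde{\mathcal{O}}(\cdot)$ rates of Theorem~\ref{thm:theta-bound} and Corollary~\ref{cor:theta-bound} but which a fully rigorous statement of the lemma should reflect.
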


Applying Lemma \ref{lemma:chernoff} to $\left(\sum_{j=1}^m \left(\sum_{t=1}^T \theta_{t,j} (o_t - \mathbb{E}[o_t]) \right)^2\right)^{1/2}$, we get
\begin{align*}
    &\sqrt{\sum_{j=1}^m \left(\sum_{t=1}^T \theta_{t,j} (o_t - \mathbb{E}[o_t]) \right)^2}\\ 
    &\leq \sqrt{\sum_{j=1}^m \left(\beta \sigma_g \sqrt{2T\log(1/\delta_j)} \right)^2}\\
    &\leq \sqrt{\sum_{j=1}^m \beta^2 \sigma_g^2 2T\log(m/\delta)} \tag{by a union bound, where $\delta_j=\delta/m$ for all $j$}\\
    &\leq \beta \sigma_g \sqrt{2Tm\log(m/\delta)}\\
\end{align*}
with probability at least $1 - \delta$.

\subsubsection{Bound on second term}
\begin{equation*}
\begin{aligned}
    &\left \|\sum_{t=1}^T \vtheta_t (\mathbb{E}[o_t] - \Bar{o})  \right\|_2\\ 
    &= \left \|\sum_{t=1}^T \vtheta_t \left(\mathbb{E}[o_t] - \frac{1}{T} \sum_{s=1}^T o_s \right)  \right\|_2\\
    &= \left \|\sum_{t=1}^T \vtheta_t \frac{1}{T} \sum_{s=1}^T \left(\mathbb{E}[o_t] - o_s \right)  \right\|_2\\
    &= \left(\sum_{j=1}^m \left( \sum_{t=1}^T \theta_{t,j} \frac{1}{T} \sum_{s=1}^T \left(\mathbb{E}[o_t] - o_s \right)\right)^2\right)^{1/2}\\
    &\leq \left(\sum_{j=1}^m \left( \sum_{t=1}^T |\theta_{t,j}| \frac{1}{T} \left| \sum_{s=1}^T \mathbb{E}[o_t] - o_s \right| \right)^2\right)^{1/2}\\
\end{aligned}
\end{equation*}

After applying Lemma \ref{lemma:chernoff}, we get
\begin{equation*}
\begin{aligned}
    &\left \|\sum_{t=1}^T \vtheta_t (\mathbb{E}[o_t] - \Bar{o})  \right\|_2\\ 
    &\leq \left(\sum_{j=1}^m \left( \sum_{t=1}^T |\theta_{t,j}| \frac{1}{T} \sigma_g \sqrt{2T \log(1/\delta_j)} \right)^2\right)^{1/2}\\
    &\leq \left(\sum_{j=1}^m \left( \beta \sigma_g \sqrt{2T \log(1/\delta_j)} \right)^2\right)^{1/2}\\
    &\leq \left(\sum_{j=1}^m \beta^2 \sigma_g^2 2T \log(m/\delta) \right)^{1/2}\\
    &\leq \beta \sigma_g \sqrt{2Tm \log(m/\delta)}\\
\end{aligned}
\end{equation*}
with probability at least $1 - \delta$
\subsection{Proof of Corollary \ref{cor:theta-bound}} \label{sec:denom}
Next let's bound the denominator. By plugging in the expression for $\vx_t$, we see that 
\begin{equation*}
\begin{aligned}
   &\sigma_{min}\left(\sum_{t=1}^T \vtheta_t (\vx_t - \Bar{\vb})^\top \right)\\ 
   &= \sigma_{min}\left(\sum_{t=1}^T \vtheta_t (\vb_t- \Bar{\vb})^\top + \vtheta_t \vtheta_t^\top \mathcal{E}_t \mathcal{E}_t^\top \right)\\
   &= \sigma_{min}\left(A + B\right),
\end{aligned}
\end{equation*}
where $A = \sum_{t=1}^T \vtheta_t (\vb_t- \Bar{\vb})^\top$ and $B = \sum_{t=1}^T \vtheta_t \vtheta_t^\top \mathcal{E}_t \mathcal{E}_t^\top$. By definition, 
\begin{equation*}
    \sigma_{min}(A+B) = \min_{\va, \|\va\|_2 = 1} \|(A + B)\va\|_2.
\end{equation*}

Via the triangle inequality, 
\begin{equation*} \label{eq:triangle}
\begin{aligned}
    \sigma_{min}(A+B) 
    &\geq \min_{\va, \|\va\|_2 = 1} \left( \|B\va\|_2 - \|A\va\|_2 \right)\\
    &\geq \min_{\va, \|\va\|_2 = 1} \|B\va\|_2 - \|A\|_2\\
    &\geq \sigma_{min}(B) - \|A\|_2\\.
\end{aligned}
\end{equation*}

\subsubsection{Bounding $\|A\|_2$}
\begin{equation*}
\begin{aligned}
    \|A\|_2 &= \left \| \sum_{t=1}^T \vtheta_t (\vb_t - \mathbb{E}[\vb_t] + \mathbb{E}[\vb_t] - \Bar{\vb})^\top \right \|_2\\
    &\leq \left \| \sum_{t=1}^T \vtheta_t (\vb_t - \mathbb{E}[\vb_t])^\top \right \|_2 + \left \| \sum_{t=1}^T \vtheta_t (\mathbb{E}[\vb_t] - \Bar{\vb})^\top \right \|_2\\
\end{aligned}
\end{equation*}

\xhdr{Bound on first term}
\begin{equation*}
\begin{aligned}
    &\left \| \sum_{t=1}^T \vtheta_t (\vb_t - \mathbb{E}[\vb_t])^\top \right \|_2 
    \leq \left \| \sum_{t=1}^T \vtheta_t (\vb_t - \mathbb{E}[\vb_t])^\top \right \|_F\\
    &\leq \left( \sum_{i=1}^m \sum_{j=1}^m \left ( \sum_{t=1}^T \theta_{t,i} (z_{t,j} - \mathbb{E}[z_{t,j}]) \right )^2 \right)^{1/2}\\
\end{aligned}
\end{equation*}
Notice that $\theta_{t,i} (z_{t,j} - \mathbb{E}[z_{t,j}])$ is a zero-mean bounded random variable with variance at most $\beta^2 \sigma_z^2$. Applying Lemma \ref{lemma:chernoff}, we can see that
\begin{equation*}
\begin{aligned}
    &\left \| \sum_{t=1}^T \vtheta_t (\vb_t - \mathbb{E}[\vb_t])^\top \right \|_2\\ 
    &\leq \left( \sum_{i=1}^m \sum_{j=1}^m \left ( \beta \sigma_z \sqrt{2T \log(1/\delta_{i,j})} \right )^2 \right)^{1/2}\\
    &\leq \left( \sum_{i=1}^m \sum_{j=1}^m \beta^2 \sigma_z^2 2T \log(m^2/\delta) \right)^{1/2}\\
    &\leq \left( m^2 \beta^2 \sigma_z^2 2T \log(m^2/\delta) \right)^{1/2}\\
    &\leq m \beta \sigma_z \sqrt{2T \log(m^2/\delta)}\\
\end{aligned}
\end{equation*}
with probability at least $1 - \delta$.

\xhdr{Bound on second term}
\begin{equation*}
\begin{aligned}
    &\left \| \sum_{t=1}^T \vtheta_t (\mathbb{E}[\vb_t] - \Bar{\vb})^\top \right \|_2\\ 
    &= \left \| \sum_{t=1}^T \vtheta_t \frac{1}{T} \sum_{s=1}^T (\mathbb{E}[\vb_t] - \vb_j)^\top \right \|_2\\
    &\leq \left \| \sum_{t=1}^T \vtheta_t \frac{1}{T} \sum_{s=1}^T (\mathbb{E}[\vb_t] - \vb_j)^\top \right \|_F\\
    &\leq \left( \sum_{i=1}^m \sum_{j=1}^m \left ( \sum_{t=1}^T \theta_{t,i} \frac{1}{T} \sum_{s=1}^T (\mathbb{E}[z_{t,j}] - z_j) \right )^2 \right)^{1/2}\\
    &\leq \left( \sum_{i=1}^m \sum_{j=1}^m \left ( \sum_{t=1}^T |\theta_{t,i}| \frac{1}{T} \left|\sum_{s=1}^T (\mathbb{E}[z_{t,j}] - z_j) \right| \right )^2 \right)^{1/2}\\
\end{aligned}
\end{equation*}

By applying Lemma \ref{lemma:chernoff}, we obtain
\begin{equation*}
\begin{aligned}
    &\left \| \sum_{t=1}^T \vtheta_t (\mathbb{E}[\vb_t] - \Bar{\vb})^\top \right \|_2\\ 
    &\leq \left( \sum_{i=1}^m \sum_{j=1}^m \left ( \sum_{t=1}^T |\theta_{t,i}| \frac{1}{T} \sigma_{z} \sqrt{2T \log(1/\delta_{i,j})} \right )^2 \right)^{1/2}\\
    &\leq \left( \sum_{i=1}^m \sum_{j=1}^m \left ( \beta \sigma_{z} \sqrt{2T \log(1/\delta_{i,j})} \right )^2 \right)^{1/2}\\
    &\leq \left( \sum_{i=1}^m \sum_{j=1}^m \beta^2 \sigma_{z}^2 2T \log(m^2/\delta) \right)^{1/2}\\
    &\leq m \beta \sigma_{z} \sqrt{2T \log(m^2/\delta)}\\
\end{aligned}
\end{equation*}

\subsubsection{Bounding $\sigma_{min}(B)$}

Next we bound $\sigma_{min}(B) = \sigma_{min}(\sum_{t=1}^T \vtheta_t \vtheta_t^\top \mathcal{E}_t \mathcal{E}_t^\top)$. We can write $\mathcal{E}_t \mathcal{E}_t^\top$ as $\mathbb{E}[\mathcal{E}_t \mathcal{E}_t^\top] + \epsilon_t$. Note that since each element of $\mathcal{E}_t$ is bounded, each element of $\epsilon_t \in \mathbb{R}^{m \times m}$ will be bounded as well. Using this formulation,
\begin{equation*}
\begin{aligned}
    \sigma_{min}(B)
    &= \sigma_{min} \left(\sum_{t=1}^T \vtheta_t \vtheta_t^\top( \mathbb{E}[\mathcal{E}_t \mathcal{E}_t^\top] + \epsilon_t) \right)\\
    &= \sigma_{min} \left(\sum_{t=1}^T \vtheta_t \vtheta_t^\top \mathbb{E}[\mathcal{E}_t \mathcal{E}_t^\top] + \sum_{t=1}^T \vtheta_t \vtheta_t^\top \epsilon_t) \right)\\
    &\geq \sigma_{min} \left(\sum_{t=1}^T \vtheta_t \vtheta_t^\top \mathbb{E}[\mathcal{E}_t \mathcal{E}_t^\top] \right) - \left\|\sum_{t=1}^T \vtheta_t \vtheta_t^\top \epsilon_t \right\|_2\\
    &\geq \sigma_{min} \left(\sum_{t=1}^T \vtheta_t \vtheta_t^\top \mathbb{E}[\mathcal{E}_t \mathcal{E}_t^\top] \right) - \left\|\sum_{t=1}^T \vtheta_t \vtheta_t^\top \epsilon_t \right\|_F\\
\end{aligned}
\end{equation*}
We proceed by bounding each term separately.

\xhdr{Bound on first term}
\begin{equation*}
\begin{aligned}
    &\sigma_{min} \left(\sum_{t=1}^T \vtheta_t \vtheta_t^\top \mathbb{E}[\mathcal{E}_t \mathcal{E}_t^\top] \right)\\ &\geq \sigma_{min}(\mathbb{E}[\mathcal{E}_t \mathcal{E}_t^\top])\sigma_{min}(\sum_{t=1}^T \vtheta_t \vtheta_t^\top)
\end{aligned}
\end{equation*}
Let $c = \sigma_{min}(\mathbb{E}[\mathcal{E}_t \mathcal{E}_t^\top])$. We assume that $\mathcal{E}_t$ is distributed such that $c > 0$. Therefore, 
\begin{equation*}
    \sigma_{min} \left(\sum_{t=1}^T \vtheta_t \vtheta_t^\top \mathbb{E}[\mathcal{E}_t \mathcal{E}_t^\top] \right) \geq c\sigma_{min}(\sum_{t=1}^T \vtheta_t \vtheta_t^\top).
\end{equation*}

Next, we use the matrix Chernoff bound to bound $c \sigma_{min}(\sum_{t=1}^T \vtheta_t \vtheta_t^\top) = c \lambda_{min}(\sum_{t=1}^T \vtheta_t \vtheta_t^\top)$ with high probability.

%in this setting, the singular values are equal to the eigenvalues because the outer product of a vector with itself is p.s.d., and a matrix is p.s.d. iff all of its eigenvalues are non-negative. Since \vtheta_t \vtheta_t^\top is a real, symmetric matrix with non-negative eigenvalues, its eigenvalues coincide with its singular values

\begin{theorem}[Matrix Chernoff]
    Consider a finite sequence $\{X_t\}_{t=1}^T$ of independent, random, Hermitian matrices with common dimension $d$. Assume that 
    \begin{equation*}
        0 \leq \lambda_{min}(X_t) \text{ and } \lambda_{max}(X_t) \leq L \text{ for each index } t 
    \end{equation*}
    Introduce the random matrix 
    \begin{equation*}
        Y = \sum_{t=1}^T X_t.
    \end{equation*}
    Define the minimum eigenvalue $\mu_{min}$ of the expectation $\mathbb{E}[Y]$:
    \begin{equation*}
    \begin{aligned}
        \mu_{min} &= \lambda_{min}(\mathbb{E}[Y]) = \lambda_{min} \left(\sum_{t=1}^T \mathbb{E}[X_t] \right)
    \end{aligned}
    \end{equation*}
    Then, 
    \begin{equation*}
        P(\lambda_{min}(Y) \leq (1 - \epsilon)\mu_{min}) \leq d\left(\frac{e^{-\epsilon}}{(1 - \epsilon)^{1-\epsilon}} \right)^{\mu_{min}/L}
    \end{equation*}
    for $\epsilon \in [0, 1)$.
\end{theorem}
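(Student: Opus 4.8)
\emph{Proof proposal.} This is the lower-tail matrix Chernoff inequality of \citet{tropp2015introduction}, and I would reproduce its proof via the matrix Laplace-transform method. The plan has four moves: convert the eigenvalue tail into a trace moment generating function, decouple the independent summands using Lieb's concavity theorem, control each summand's matrix cumulant generating function with a scalar chord inequality, and finally optimize a single scalar exponent.

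First I would reduce to a trace MGF. Fix any $\theta < 0$. Since $\theta<0$, the event $\{\lambda_{min}(Y)\le t\}$ coincides with $\{e^{\theta\lambda_{min}(Y)}\ge e^{\theta t}\}$, and $e^{\theta\lambda_{min}(Y)}=\lambda_{max}(e^{\theta Y})\le \operatorname{tr} e^{\theta Y}$ because the eigenvalues of $e^{\theta Y}$ are positive. Markov's inequality then gives $P(\lambda_{min}(Y)\le t)\le e^{-\theta t}\,\mathbb{E}\operatorname{tr} e^{\theta Y}$ for every $\theta<0$.

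The crux is bounding $\mathbb{E}\operatorname{tr}\exp(\sum_t \theta X_t)$, and this is the step I expect to be the main obstacle: the $X_t$ do not commute, so $e^{A+B}\ne e^A e^B$ and the scalar factorization that powers the ordinary Chernoff bound is unavailable. I would resolve it with Lieb's concavity theorem (concavity of $A\mapsto \operatorname{tr}\exp(H+\log A)$), which after iterating Jensen's inequality over the independent $X_t$ yields the subadditivity bound $\mathbb{E}\operatorname{tr}\exp(\sum_t \theta X_t)\le \operatorname{tr}\exp(\sum_t \log \mathbb{E}\, e^{\theta X_t})$. To control a single summand I would use $0\preceq X_t\preceq L I$ together with the scalar chord inequality $e^{\theta z}\le 1+\tfrac{e^{\theta L}-1}{L}z$ on $[0,L]$, which transfers through the spectral calculus to $\mathbb{E}\, e^{\theta X_t}\preceq I+g(\theta)\,\mathbb{E}[X_t]$ with $g(\theta):=(e^{\theta L}-1)/L$; then $\log\mathbb{E}\, e^{\theta X_t}\preceq g(\theta)\,\mathbb{E}[X_t]$ by $\log(I+M)\preceq M$ and operator monotonicity of the logarithm.

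Summing gives $\sum_t \log\mathbb{E}\, e^{\theta X_t}\preceq g(\theta)\sum_t \mathbb{E}[X_t]$, and since $\theta<0$ forces $g(\theta)<0$ we have $\lambda_{max}(g(\theta)\sum_t\mathbb{E}[X_t])=g(\theta)\mu_{min}$; monotonicity of $\operatorname{tr}\exp$ in the Loewner order (combined with the previous two displays) then yields $\mathbb{E}\operatorname{tr} e^{\theta Y}\le d\, e^{g(\theta)\mu_{min}}$. Combining, $P(\lambda_{min}(Y)\le t)\le d\,\exp(-\theta t+g(\theta)\mu_{min})$ for all $\theta<0$. Setting $t=(1-\epsilon)\mu_{min}$ and minimizing over $\theta<0$ (the optimizer satisfies $e^{\theta L}=1-\epsilon$) produces the stated bound $d\,\big(e^{-\epsilon}/(1-\epsilon)^{1-\epsilon}\big)^{\mu_{min}/L}$ for $\epsilon\in[0,1)$. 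The only genuinely nontrivial ingredient is the decoupling step, which is exactly where Lieb's theorem does the work that the scalar Chernoff argument gets for free.
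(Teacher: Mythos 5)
Your proof is correct, but there is nothing in the paper to compare it against: the paper never proves this statement. It is imported verbatim as a known result from \citet{tropp2015introduction} and used as a black box inside the proof of Corollary~\ref{cor:theta-bound} (Appendix~\ref{sec:denom}), where it lower-bounds $\lambda_{min}\bigl(\sum_{t=1}^T \vtheta_t \vtheta_t^\top\bigr)$ by $\tfrac{1}{2}T\sigma_\theta^2$ with high probability. What you have written is a faithful reconstruction of Tropp's own proof of the lower-tail matrix Chernoff bound, and each step checks out: the reduction $P(\lambda_{min}(Y)\le t)\le e^{-\theta t}\,\E\operatorname{tr}e^{\theta Y}$ for $\theta<0$ (using that $\theta\lambda_{min}(Y)=\lambda_{max}(\theta Y)$ and spectral mapping); the decoupling $\E\operatorname{tr}\exp\bigl(\sum_t\theta X_t\bigr)\le\operatorname{tr}\exp\bigl(\sum_t\log\E e^{\theta X_t}\bigr)$ via Lieb's concavity theorem and iterated Jensen, which is indeed the one step with no scalar analogue shortcut; the chord bound $e^{\theta X_t}\preceq I+g(\theta)X_t$ on $0\preceq X_t\preceq LI$ followed by $\log(I+M)\preceq M$ (note $I+g(\theta)\E[X_t]\succ 0$ since $g(\theta)>-1/L$, so the logarithm is well defined); the observation that $g(\theta)<0$ converts $\lambda_{max}$ of $g(\theta)\sum_t\E[X_t]$ into $g(\theta)\mu_{min}$, giving $\E\operatorname{tr}e^{\theta Y}\le d\,e^{g(\theta)\mu_{min}}$; and the optimizer $e^{\theta L}=1-\epsilon$, which yields exponent $\tfrac{\mu_{min}}{L}\bigl(-\epsilon-(1-\epsilon)\log(1-\epsilon)\bigr)$, i.e.\ exactly $d\bigl(e^{-\epsilon}/(1-\epsilon)^{1-\epsilon}\bigr)^{\mu_{min}/L}$. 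The only pedantic remark: at $\epsilon=0$ the optimal $\theta$ degenerates to $0$, which lies outside $\theta<0$, but the claimed bound there is $d\ge 1$ and holds trivially, so taking an infimum over $\theta<0$ suffices.
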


Let $Y = \sum_{t=1}^T \vtheta_t \vtheta_t^\top$. In our setting, 
\begin{equation*}
\begin{aligned}
    \mu_{min} &= \lambda_{min}\left(\sum_{t=1}^T \mathbb{E}[\vtheta_t \vtheta_t^\top] \right)\\
    &= T \lambda_{min}\left(\mathbb{E}[\vtheta_t \vtheta_t^\top] \right)\\
    &= T \lambda_{min}\left(\sigma_{\theta}^2 \mathbb{I}_{m \times m} + \mathbb{E}[\vtheta_t] \mathbb{E}[\vtheta_t^\top] \right)\\
\end{aligned}
\end{equation*}

$\sigma_{\theta}^2 \mathbb{I}_{m \times m}$ and $\mathbb{E}[\vtheta_t] \mathbb{E}[\vtheta_t^\top]$ commute, so

\iffalse
\lscomment{Can we say $\mu_{\min} \geq T \left( \lambda_{min}\left(\sigma_{\theta}^2 \mathbb{I}_{m \times m} \right) + \lambda_{min}\left(\mathbb{E}[\vtheta_t] \mathbb{E}[\vtheta_t^\top] \right) \right)$? That way, $\lambda_{\min}(Y)\leq(1-\epsilon)\left(T \left( \lambda_{min}\left(\sigma_{\theta}^2 \mathbb{I}_{m \times m} \right) + \lambda_{min}\left(\mathbb{E}[\vtheta_t] \mathbb{E}[\vtheta_t^\top] \right) \right)\right)\leq(1-\epsilon)\mu_{\min}$}
\fi 
\begin{equation*}
\begin{aligned}
    \mu_{min} &= T \left( \lambda_{min}\left(\sigma_{\theta}^2 \mathbb{I}_{m \times m} \right) + \lambda_{min}\left(\mathbb{E}[\vtheta_t] \mathbb{E}[\vtheta_t^\top] \right) \right)\\
    &= T \lambda_{min}\left(\sigma_{\theta}^2 \mathbb{I}_{m \times m} \right)\\
    &= T \sigma_{\theta}^2 \lambda_{min}\left(\mathbb{I}_{m \times m} \right)\\
    &= T \sigma_{\theta}^2\\
\end{aligned}
\end{equation*}

\begin{equation*}
    \lambda_{max}(\vtheta_t \vtheta_t^\top) = \beta m,
\end{equation*}
so let $L = \beta m$.

Picking $\epsilon = 1/2$ and applying the matrix Chernoff bound to $ \lambda_{min}(\sum_{t=1}^T \vtheta_t \vtheta_t^\top)$, we obtain
\begin{equation*}
    P\left(\lambda_{min} \left(\sum_{t=1}^T \vtheta_t \vtheta_t^\top \right)\leq \frac{1}{2}T \sigma_{\theta}^2 \right) \leq d\left(\frac{1}{2}e \right)^{-\frac{T \sigma_{\theta}^2}{2\beta m}}
\end{equation*}
By rearranging terms, we see that if $T \geq \frac{2 \beta m}{\sigma_{\theta}^2 \log{ \frac{1}{2}e}}\log{\frac{d}{\delta}}$, then
\begin{equation*}
    \lambda_{min} \left(\sum_{t=1}^T \vtheta_t \vtheta_t^\top \right) \geq \frac{1}{2}T \sigma_{\theta}^2
\end{equation*}
with probability at least $1 - \delta$.

\xhdr{Bound on second term}
\begin{equation*}
\begin{aligned}
    \left\|\sum_{t=1}^T \vtheta_t \vtheta_t^\top \epsilon_t \right\|_F
    &= \left(\sum_{i=1}^m \sum_{j=1}^m \left(\sum_{t=1}^T \theta_{t,i} \theta_{t,j} \epsilon_{t,i,j} \right)^2 \right)^{1/2}\\    
\end{aligned}
\end{equation*}
Since each $\epsilon_{t,i,j}$ is a bounded zero-mean random variable, $\theta_{t,i} \theta_{t,j} \epsilon_{t,i,j}$ is also a bounded zero-mean random variable, with variance at most $\beta^4 \sigma_{\mathcal{E}}^2$ We can now apply Lemma \ref{lemma:chernoff}:
\begin{equation*}
\begin{aligned}
    &\left\|\sum_{t=1}^T \vtheta_t \vtheta_t^\top \epsilon_t \right\|_F\\
    &\leq \left(\sum_{i=1}^m \sum_{j=1}^m \left(\beta^2 \sigma_{\mathcal{E}} \sqrt{2T \log (1/\delta_{i,j})} \right)^2 \right)^{1/2}\\    
    &\leq \left(\sum_{i=1}^m \sum_{j=1}^m \beta^4 \sigma_{\mathcal{E}}^2 2T \log (m^2/\delta) \right)^{1/2}\\ 
    &\leq \left(m^2 \beta^4 \sigma_{\mathcal{E}}^2 2T \log (m^2/\delta) \right)^{1/2}\\ 
    &\leq m \beta^2 \sigma_{\mathcal{E}} \sqrt{2T \log (m^2/\delta)}\\ 
\end{aligned}
\end{equation*}
with probability at least $1 - \delta$.

\xhdr{Putting everything together}

Putting everything together, we have that 
\begin{equation*}
\begin{aligned}
&\| \widehat{\vtheta} - \vtheta^* \|_2 \leq\\ &\frac{2\beta \sigma_g \sqrt{2m\log(m/\delta)} }{\frac{1}{2}c \sqrt{T} \sigma_{\theta}^2 - m \beta^2 \sigma_{\mathcal{E}} \sqrt{2 \log (m^2/\delta)} - 2m \beta \sigma_z \sqrt{2 \log(m^2/\delta)}}
\end{aligned}
\end{equation*}
with probability at least $1 - 6\delta$.
\section{Individual Fairness Derivations}\label{sec:fairness-derivations}

\subsection{Proof of Theorem \ref{thm:IF}}
\begin{proof}
\begin{equation*}
\begin{aligned}
    &|\hat{y} - \hat{y}'| =\\ &|(\vb - \vb')^\top \vtheta^* + \vtheta^{*\top} (\mathcal{E}\mathcal{E}^\top - \mathcal{E}' \mathcal{E}^{'\top}) \vtheta^* |\\
    &= |(\vb_{\mathcal{C}} - \vb_{\mathcal{C}}')^\top \vtheta^* + \vtheta^{*\top} ((\mathcal{E}\mathcal{E}^\top)_{\mathcal{C}} - (\mathcal{E}'\mathcal{E}^{'\top})_{\mathcal{C}} ) \vtheta^* |\\
    &\leq \| \vb_{\mathcal{C}} - \vb_{\mathcal{C}}' \|_2 \| \vtheta^* \|_2 + \| \vtheta^* \|_2 \| ((\mathcal{E}\mathcal{E}^\top)_{\mathcal{C}} - (\mathcal{E}'\mathcal{E}^{'\top})_{\mathcal{C}}) \vtheta^* \|_2\\
    &\leq \| \vb_{\mathcal{C}} - \vb_{\mathcal{C}}' \|_2 + \max_{\vtheta, \|\vtheta\|_2 = 1} \| ((\mathcal{E}\mathcal{E}^\top)_{\mathcal{C}} - (\mathcal{E}'\mathcal{E}^{'\top})_{\mathcal{C}}) \vtheta^* \|_2\\
    &\leq \| \vb_{\mathcal{C}} - \vb_{\mathcal{C}}' \|_2 + \| ((\mathcal{E}\mathcal{E}^\top)_{\mathcal{C}} - (\mathcal{E}'\mathcal{E}^{'\top})_{\mathcal{C}})\|_2\\
\end{aligned}
\end{equation*}
\end{proof}

\subsection{Proof of Theorem \ref{thm:IF-bound}}
\begin{proof}
    Let 
    \begin{align*}
        b_{\widetilde{\mathcal{C}},i} &=  
        \begin{cases}
            b_i & \text{if } i \not\in \mathcal{C}\\
            0 & \text{otherwise}
        \end{cases}
        , &\\
        (\mathcal{E}\mathcal{E}^\top)_{\widetilde{\mathcal{C}},ij} &=  
        \begin{cases}
            (\mathcal{E}\mathcal{E}^\top)_{ij} & \text{if } i,j \not\in \mathcal{C}\\
            0 & \text{otherwise}.
        \end{cases}    
    \end{align*}

    \begin{equation*}
    \begin{aligned}
        &|\hat{y} - \hat{y}'| = |(\vb_{\mathcal{C}} - \vb_{\mathcal{C}}')^\top \vtheta + \vtheta^\top ((\mathcal{E}\mathcal{E}^\top)_\mathcal{C} - (\mathcal{E}'\mathcal{E}^{'\top})_\mathcal{C})\vtheta\\
        &+ (\vb_{\widetilde{\mathcal{C}}} - \vb_{\widetilde{\mathcal{C}}}')^\top \vtheta + \vtheta^\top ((\mathcal{E}\mathcal{E}^\top)_{\widetilde{\mathcal{C}}} - (\mathcal{E}'\mathcal{E}^{'\top})_{\widetilde{\mathcal{C}}})\vtheta|\\
        &= |(\vb_{\widetilde{\mathcal{C}}} - \vb_{\widetilde{\mathcal{C}}}')^\top \vtheta + \vtheta^\top ((\mathcal{E}\mathcal{E}^\top)_{\widetilde{\mathcal{C}}} - (\mathcal{E}'\mathcal{E}^{'\top})_{\widetilde{\mathcal{C}}})\vtheta|\\
        &= \left| \sum_{i \not \in \mathcal{C}} (b_i - b_i') \theta_i + \sum_{i \not \in \mathcal{C}} \sum_{j \not \in \mathcal{C}} ((\mathcal{E}\mathcal{E}^\top)_{ij} - (\mathcal{E}'\mathcal{E}^{'\top})_{ij}) \theta_i \theta_j \right|\\
    \end{aligned}
    \end{equation*}
\end{proof}

\subsection{Example \ref{ex:fairness} Derivations}
\begin{equation*}
\begin{aligned}
    d(\vu, \vu') &= \|b_{\mathcal{C}} - b'_{\mathcal{C}}\|_2 + \| (\mathcal{E} \mathcal{E}^\top)_\mathcal{C} - (\mathcal{E}' \mathcal{E}'^\top)_\mathcal{C} \|_2\\
    &= \| (\boldsymbol{\delta} - \boldsymbol{\delta}')_\mathcal{C} I_{n \times n} \|_2 = \|\mathbf{0}_{n \times n}\|_2 = 0,\\
\end{aligned}
\end{equation*}
where 
\begin{equation*}
    \delta_{\mathcal{C}, i} = 
    \begin{cases}
    \delta_i & \text{if } i \in \mathcal{C}\\
    0 & \text{otherwise}.
    \end{cases}
\end{equation*}

\begin{equation*}
\begin{aligned}
    |\hat{y} - \hat{y}'| &= \left| 0 + \sum_{i \not \in \mathcal{C}} (n - 0) \theta_i^2 \right|
    = n \sum_{i =1}^{n/2} \theta_i^2\\
\end{aligned}
\end{equation*}

\section{Agent outcome maximization derivations}

\subsection{Derivation of $\vtheta^{AO}$} \label{sec:AO-derivation}
\begin{equation*}
\begin{aligned}
    \vtheta^{AO} = \arg \max_{\vtheta \in \mathcal{S}} \quad & \mathbb{E}[y_t]\\
\end{aligned}
\end{equation*}

Substituting in for $y_t$:
\begin{equation*}
\begin{aligned}
    \vtheta^{AO} = \arg \max_{\vtheta \in \mathcal{S}} \quad & \mathbb{E}[\vx_t^\top \vtheta^* + o_t]\\
\end{aligned}
\end{equation*}

\begin{equation*}
\begin{aligned}
    \vtheta^{AO} = \arg \max_{\vtheta \in \mathcal{S}} \quad & \mathbb{E}[\vx_t^\top \vtheta^*] + \mathbb{E}[o_t]\\
\end{aligned}
\end{equation*}

\begin{equation*}
\begin{aligned}
    \vtheta^{AO} = \arg \max_{\vtheta \in \mathcal{S}} \quad & \mathbb{E}[\vx_t^\top \vtheta^*]\\
\end{aligned}
\end{equation*}

Substitute in for $\vx_t$:
\begin{equation*}
\begin{aligned}
    \vtheta^{AO} = \arg \max_{\vtheta \in \mathcal{S}} \quad & \mathbb{E}[(\vb_t^\top + \vtheta^\top \mathcal{E}_t \mathcal{E}_t^\top) \vtheta^*]\\
\end{aligned}
\end{equation*}

\begin{equation*}
\begin{aligned}
    \vtheta^{AO} = \arg \max_{\vtheta \in \mathcal{S}} \quad & \mathbb{E}[\vb_t^\top \vtheta^*] + \mathbb{E}[\vtheta^\top \mathcal{E}_t \mathcal{E}_t^\top \vtheta^*]\\
\end{aligned}
\end{equation*}

\begin{equation*}
\begin{aligned}
    \vtheta^{AO} = \arg \max_{\vtheta \in \mathcal{S}} \quad & \mathbb{E}[\vtheta^\top \mathcal{E}_t \mathcal{E}_t^\top \vtheta^*]\\
\end{aligned}
\end{equation*}

\begin{equation*}
\begin{aligned}
    \vtheta^{AO} &= \arg \max_{\vtheta \in \mathcal{S}} \quad \vtheta^\top \mathbb{E}[\mathcal{E}_t \mathcal{E}_t^\top]\vtheta^*\\
    &= \arg \max_{\vtheta \in \mathcal{S}} \quad \sum_{i \in \mathcal{C}} \sum_{j \in \mathcal{C}} \sum_{k=1}^d \E[w_{ik} w_{jk}] \theta_i \theta_j^*\\ &+ \sum_{i \not\in \mathcal{C}} \sum_{j \in \mathcal{C}} \sum_{k=1}^d \E[w_{ik} w_{jk}] \theta_i \theta_j^*
\end{aligned}
\end{equation*}

\section{Predictive risk minimization derivations}
\subsection{Population gradient derivation} \label{sec:pop-grad-derivation}
The gradient of the population risk function $f(\vtheta_t) = \mathbb{E}[(\widehat{y}_t - y_t)^2]$ can be derived as follows

\begin{equation*}
\begin{aligned}
    \nabla_{\vtheta_t}f(\vtheta_t) &= \mathbb{E}[\nabla_{\vtheta_t} (\widehat{y}_t - y_t)^2]\\
    &= 2\mathbb{E}[(\widehat{y}_t - y_t)\nabla_{\vtheta_t} (\widehat{y}_t - y_t)]\\
    &= 2\mathbb{E}[(\widehat{y}_t - y_t)\nabla_{\vtheta_t} (\vx_t^\top \vtheta_t - \vx_t^\top \vtheta^* - o_t)]\\
    &= 2\mathbb{E}[(\widehat{y}_t - y_t)\nabla_{\vtheta_t} (\vx_t^\top (\vtheta_t - \vtheta^*))]\\
    &= 2\mathbb{E}[(\widehat{y}_t - y_t)\nabla_{\vtheta_t} ((\vb_t^\top + \vtheta_t \mathcal{E}_t \mathcal{E}_t^\top) (\vtheta_t - \vtheta^*))]\\
    &= 2\mathbb{E}[(\widehat{y}_t - y_t)(\vb_t +  \mathcal{E}_t \mathcal{E}_t^\top (2\vtheta_t - \vtheta^*))]\\
    &= 2\mathbb{E}[(\widehat{y}_t - y_t)(\vx_t +  \mathcal{E}_t \mathcal{E}_t^\top (\vtheta_t - \vtheta^*))]\\
\end{aligned}
\end{equation*}

\section{Omitted experiments}
\label{sec:omitted-experiments}
In this section, we present additional details for our experiments in \Cref{sec:experiment}. At the end, we provide more information regarding the dataset and computation resources used.

\begin{figure}[ht]
    \centering
    \includegraphics[width=\linewidth]{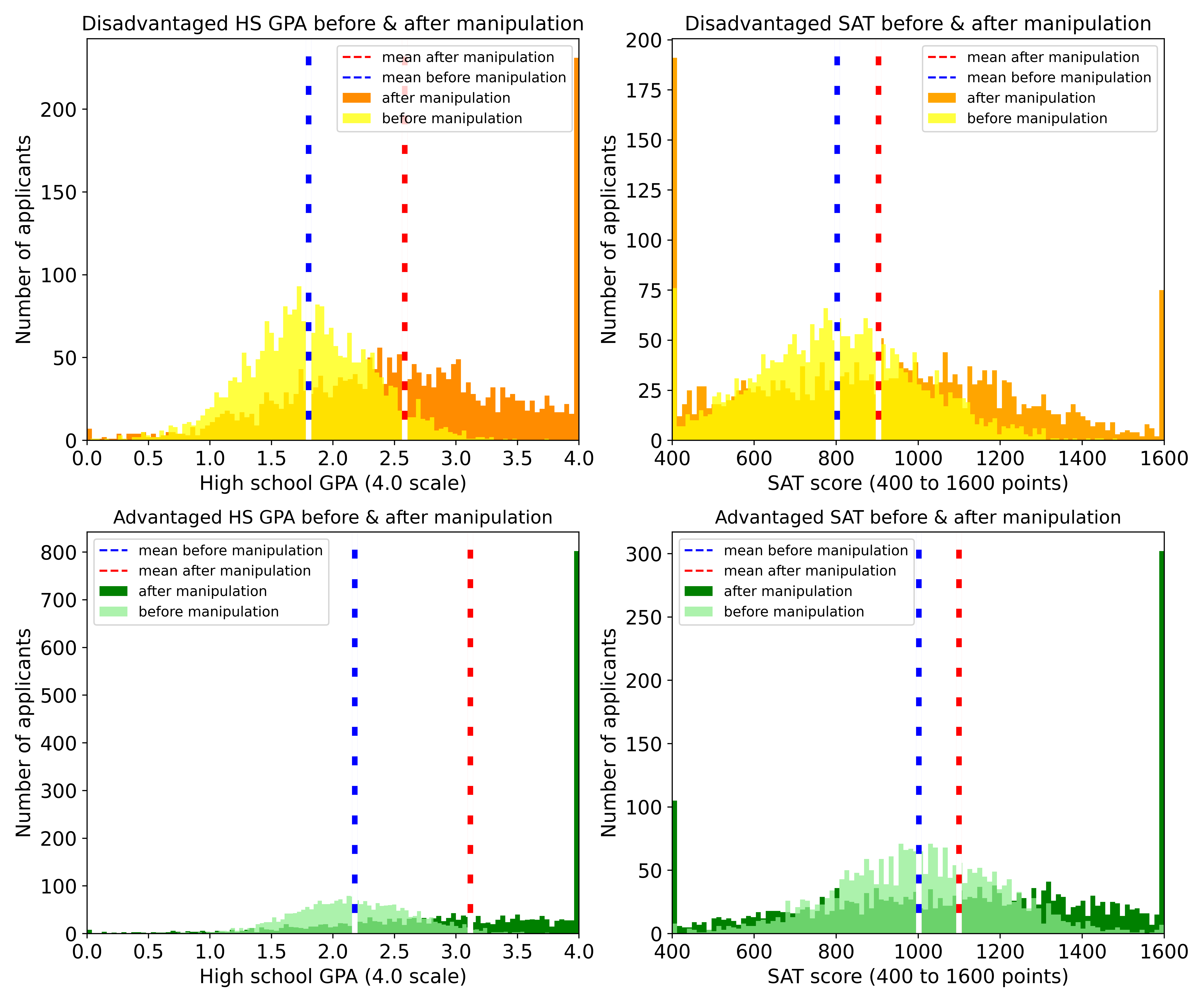}
    \caption{Distributions of unobserved features $\vb$ (in lighter colors), i.e. initial HS GPA (two left figures) and SAT (two right figures), and observed features $\vx$ (darker colors) for disadvantaged (two top figures in yellow and orange) and advantaged students (two bottom figures in green).}
    \label{fig:features-shift}
\end{figure}

\iffalse
\begin{figure}[ht]
    \centering
    \begin{subfigure}[t]{0.45\textwidth}
        \includegraphics[width=\linewidth]{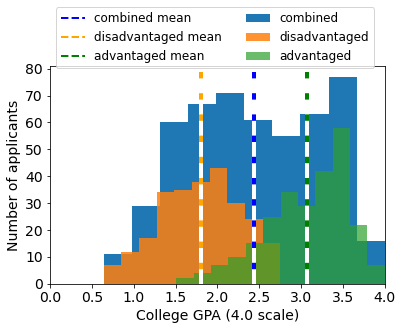}
        \caption{Distribution of college GPAs (outcomes $y$) for disadvantaged students (orange), advantaged students (green), and both combined (blue).}
        \label{fig:outcomes}
    \end{subfigure}
    \hfill
    \begin{subfigure}[t]{0.45\textwidth}
        \centering
        \includegraphics[width=\linewidth]{}
        \caption{Distribution of semi-synthetic credit card default rates (outcomes $y$) for young (green), older (young), and all applicants (blue). Default rates are shown on a logarithmic scale.}
        \label{fig:credit-outcomes}
    \end{subfigure}
    \caption{Outcome distributions for semi-synthetic datasets. Left: college GPA for university admissions data. Right: credit card default rates for credit risk assessment data.}
    \label{fig:both-outcomes}
\end{figure}
\fi 
\begin{figure}
    \centering
    \includegraphics[width=0.6\linewidth]{}
    \caption{Distribution of college GPAs (outcomes $y$) for disadvantaged students (orange), advantaged students (green), and both combined (blue).}
    \label{fig:outcomes}
\end{figure}

\begin{figure}
    \centering
    \includegraphics[width=0.5\linewidth]{}
    \caption{OLS versus 2SLS estimates for high school GPA effect on college GPA over 5000 rounds. Results are averaged over 10 runs, with the error bars (in lighter colors) representing one standard deviation. The red dashed line is the true causal effect of each high school GPA on college GPA.}
    \label{fig:gpa-estimates}
\end{figure}

\subsection{University admissions full experimental description} We construct a semi-synthetic dataset based on an example of university admissions with disadvantaged and advantaged students from \citet{hu2018disparate}. From a real dataset of the high school (HS) GPA, SAT score, and college GPA of 1000 college students, we estimate the causal effect of observed features $[\text{SAT}, \text{HS GPA}]$ on college GPA to be $\vtheta^*=[0.00085, 0.49262]^\top$ using OLS (which is assumed to be consistent, since we have yet to modify the data to include confounding). We then use this dataset to construct synthetic data which looks similar, yet incorporates confounding factors. For simplicity, we let the true causal effect parameters $\vtheta^*=[0, 0.5]^\top$. That is, we assume there is a significant causal relationship between college performance and HS GPA, but not SAT score.\footnote{Though this assumption may be contentious, it is based on existing research \cite{allensworth2020gpasat}.} We consider two types of student backgrounds, those from a \textit{disadvantaged group} and those from an \textit{advantaged group}. We assume disadvantaged applicants have, on average, lower HS GPA and SAT $\vb_t$, lower baseline college GPA $o_t$, and require more effort to improve observable features (reflected in $\mathcal{E}_t$): this could be due to disadvantaged groups being systemically underserved, marginalized, or abjectly discriminated against (and the converse for advantaged groups). Initial features $\vb_t$ are constructed as such: For any disadvantaged applicant $t$, their initial SAT features $z_t^{\text{SAT}} \sim \mathcal{N}(800,200)$ and initial HS GPA $z_t^{\text{HS GPA}} \sim \mathcal{N}(1.8,0.5)$. For any advantaged applicant $t$, $z_t^{\text{SAT}} \sim \mathcal{N}(1000,200)$ and $z_t^{\text{HS GPA}} \sim \mathcal{N}(2.2,0.5)$. We truncate SAT scores between 400 to 1600 and HS GPA between 0 to 4. For any applicant $t$, we randomly deploy assessment rule $\vtheta_t = [\theta_t^{\text{SAT}},\theta_t^{\text{HS GPA}}]^\top$ where $\theta_t^{\text{SAT}}\sim \mathcal{N}(1,10)$ and $\theta_t^{\text{HS GPA}}\sim \mathcal{N}(1,2)$. $\vtheta_t$ need not be zero-mean, so universities can play a reasonable assessment rule with slight perturbations while still being able to perform unbiased causal estimation. Components of the average effort conversion matrix $\E[\mathcal{E}_t]$ are smaller for disadvantaged applicants, which makes their mean improvement worse (see \Cref{fig:features-shift}). We set the expected effort conversion term $\E[\mathcal{E}_t] = \begin{pmatrix} 10 & 0 \\ 0 & 1 \end{pmatrix}$ for simplicity. Each row of $\E[\mathcal{E}_t]$ corresponds to effort expended to change a specific feature. For example, entries in the first row of $\E[\mathcal{E}_t]$ correspond to effort expended to change one's SAT score. For each applicant $t$, we perturb $\E[\mathcal{E}_t]$ with random noise drawn from $\mathcal{N}(0.5,0.25)$ to the top left entry and noise drawn from $\mathcal{N}(0.1,0.01)$ the bottom right entry to produce $\mathcal{E}_t$. We add this noise to $\E[\mathcal{E}_t]$ to produce $\mathcal{E}_t$ for advantaged applicants and subtract for disadvantaged applicants: thus, it takes more effort, on average, for members of disadvantaged groups to improve their HS GPA and SAT scores than members of advantaged groups. Finally, we construct college GPA (true outcome $y_t$) by multiplying observed features $\vx_t$ by the true effect parameters $\vtheta^*$. We then add confounding error $o_t$ where $o_t\sim\mathcal{N}(0.5,0.2)$ for disadvantaged applicants and $o_t\sim\mathcal{N}(1.5,0.2)$ for advantaged applicants. Disadvantage applicants could have lower baseline outcomes, e.g. due to institutional barriers or discrimination. While the setting we consider is simplistic, \Cref{fig:outcomes,fig:features-shift} demonstrate that our semi-synthetic admissions data behaves reasonably.\footnote{For example, the mean shift in SAT scores from the first to second exam is 46 points \cite{goodman2020sat}. In our data, the mean shift for disadvantaged and advantaged applicants is about 36 points and 91 points, respectively.}

\subsection{Experimental Details}\label{sec:answers}
We evaluate our model on a semi-synthetic dataset based on our running university admission example \cite{Dua:2019}. The dataset we base our experiments off of is publicly available at \url{www.openintro.org/data/index.php?data=satgpa}. This dataset does not contain personally identifiable information or offensive content. Since this is a publicly available dataset, no consent from the people whose data we are using was required. We ran our experiments on a 2020 MacBook Air laptop with 16GB of RAM. 
\section{Comparison with \citeauthor{shavit2020strategic}}\label{sec:shavit}

The setting most similar to ours is that of \citeauthor{shavit2020strategic}. They consider a strategic classification setting in which an agent's outcome is a linear function of features --some observable and some not (see \Cref{fig:shavit-model-dag} for a graphical representation of their model). While they assume that an agent's hidden attributes can be modified strategically, we choose to model the agent as having an unmodifiable private \emph{type}. Both of these assumptions are reasonable, and some domains may be better described by one model than the other. For example, the model of \citeauthor{shavit2020strategic} may be useful in a setting such as car insurance pricing, where some unobservable factors which lead to safe driving are modifiable. On the other hand, settings like our college admissions example in which the unobservable features which contribute to college success (i.e. socioeconomic status, lack of resources, etc., captured in $o_t$) are not easily modifiable.

One benefit of our setting is that we are able to use $\vtheta_t$ as a valid instrument to recover the true relationship $\vtheta^*$ between observable features and outcomes. This is generally not possible in the model of \cite{shavit2020strategic}, since $\vtheta_t$ violates the backdoor criterion as long as there exists any hidden features $\mathbf{h}_t$ and is therefore not a valid instrument. Another difference between our setting and theirs is that we allow for a heterogeneous population of agents, while they do not. Specifically, they assume that each agent's mapping from actions to features is the same, while our model is capable of handling mappings which vary from agent-to-agent.

A natural question is whether or not there exists a general model which captures the setting of both \citeauthor{shavit2020strategic} and ours. We provide such a model in \Cref{fig:general-model-dag}. In this setting, an agent has both observable and unobservable features, both of which are affected by the assessment rule $\vtheta_t$ deployed and the agent's private type $u_t$. However, much like the setting of \citeauthor{shavit2020strategic}, $\vtheta_t$ violates the backdoor criterion, so it cannot be used as a valid instrument in order to recover the true relationship between observable features and outcomes. Moreover, the following toy example illustrates that \emph{no} form of true parameter recovery can be performed when an agent's unobservable features are modifiable.

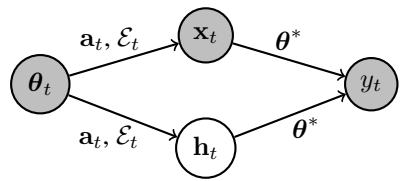
\begin{figure}
    \centering
    \begin{tikzpicture}[node distance={22mm}, thick, main/.style = {draw, circle}] 
         \node[draw,circle,fill=lightgray] (1) {$\vtheta_t$}; 
         \node[draw,circle,fill=lightgray] (2) [right of=1,yshift=0.65cm] {$\vx_t$}; 
         \node[node distance={15mm}, thick,draw,circle] (3) [below of=2] {$\mathbf{h}_t$};
         \node[draw,circle,fill=lightgray,,yshift=-0.65cm] (4) [right of=2] {$y_t$};
         \draw[->] (1) -- node[above left,xshift=0.35cm] {$\va_t$, $\mathcal{E}_t$} (2);
         \draw[->] (1) -- node[below left,xshift=0.35cm] {$\va_t$, $\mathcal{E}_t$} (3);
         %\draw[->] (2) -- node[left] {$z_t$} (3);
         % \draw (1) to [out=180,in=270,looseness=5] (1);
         \draw[->] (3) -- node[below right=-0.15cm] {$\vtheta^*$} (4);
         \draw[->] (2) -- node[above] {$\vtheta^*$} (4);
    \end{tikzpicture} 
    \caption{Graphical model of \citeauthor{shavit2020strategic}. Observable features $\vx_t$ (e.g. the type of car a person drives) and unobservable features $\mathbf{h}_t$ (e.g. how defensive of a diver someone is) are affected by $\vtheta_t$ through action $\va_t$ (e.g. buying a new car) and common action conversion matrix $\mathcal{E}$ (representing, in part, the cost to a person of buying a new car). Outcome $y_t$ (in this example, the person's chance of getting in an accident) is affected by $\vx_t$ and $\mathbf{h}_t$ through the true causal relationship $\vtheta_t$. Note that causal parameter recovery is not possible in this setting unless all features are observable.}
    \label{fig:shavit-model-dag}
\end{figure}

\begin{figure}
    \centering
    \begin{tikzpicture}[node distance={25mm}, thick, main/.style = {draw, circle}] 
         \node[draw,circle,fill=lightgray] (1) {$\vtheta_t$};
         \node[draw,circle,fill=lightgray] (2) [right of=1] {$\vx_t$}; 
         \node[node distance={20mm},draw,circle,fill=lightgray,yshift=-1.2cm] (3) [right of=2] {$y_t$};
         \node[main] (4) [below of=1] {$u_t$};
         \node[main] (5) [right of=4] {$\mathbf{h}_t$};
         \draw[->] (1) -- node[above] {$\va_t$} (2);
         \draw[->] (2) -- node[above] {$\vtheta^*$} (3);
         \draw[->] (1) -- node[above right,yshift=0.5cm,xshift=-0.55cm] {$\va_t$} (5);
         \draw[->] (4) -- node[above,yshift=-.7cm,xshift=-1cm] {$\vb_t,\mathcal{E}_t$} (2);
         \draw[->] (5) -- node[below] {$\vtheta^*$} (3);
         \draw[->] (4) edge[bend right=2cm] node [below right]{$o_t$} (3);
         \draw[->] (4) -- node[above] {$\vb_t,\mathcal{E}_t$} (5);
    \end{tikzpicture} 
    \caption{Graphical model which captures both our setting and that of \citeauthor{shavit2020strategic}. In this setting, observable features $\vx_t$ \emph{and} unobservable features $\mathbf{h}_t$ are affected by $\vtheta_t$ through action $\va_t$. The agent's private type $u_t$ affects $\vx_t$ and $\mathbf{h}_t$ through initial feature values $\vb_t$ and action conversion matrix $\mathcal{E}_t$. The agent's outcome $y_t$ depends on $\vx_t$ and $\mathbf{h}_t$ through the causal relationship $\vtheta^*$ and $u_t$ through confounding term $o_t$. Note that much like the setting of \cite{shavit2020strategic}, causal parameter recovery is not possible in this setting unless all features are observable.}
    \label{fig:general-model-dag}
\end{figure}
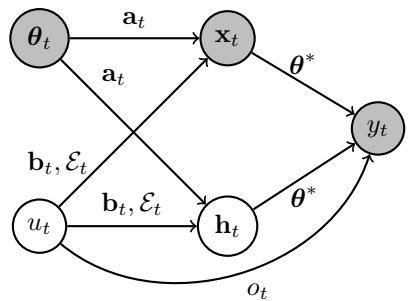

\begin{example}
    Consider the one-dimensional setting 
    \begin{equation*}
        y_t = \theta^*x_t + \beta^* h_t, 
    \end{equation*}
    
    where $x_t$ is an agent's observable, modifiable feature and $h_t$ is an unobservable, modifiable feature. If the relationship between $x_t$ and $h_t$ is unknown, then it is generally impossible to recover the true relationship between $x_t$, $h_t$, and outcome $y_t$. To see this, consider the setting where $h_t$ and $x_t$ are highly correlated. In the extreme case, take $h_t = x_t, \; \forall t$. (Note we use equality to indicate identical feature values, not a causal relationship.) In this setting, the models $\theta^* = 1$, $\beta^* = 1$ and $\theta^* = 2$, $\beta^* = 0$ produce the same outcome $y_t$ for all $x \in \mathbb{R}$, making it impossible to distinguish between the two models, even in the limit of infinite data.
\end{example}
\section{Setting of SGD comparison}\label{sec:figure-setting}
In 1D, the derivative of $\mathbb{E}[(\widehat{y}_t - y_t)]^2$ which accounts for $x_t$ and $y_t$'s dependence on $\vtheta_t$ takes the form 
\begin{equation*}
\begin{aligned}
    \Delta &= 2(\mathbb{E}[(\widehat{y}_t - y_t) x_t] + \mathbb{E}_{x_t}[\widehat{y}_t - y_t]\mathcal{E}^2 (\theta_t - \theta^*)).
\end{aligned}
\end{equation*}
By plugging in for $x_t$, $y_t$, $\widehat{y}_t$ and simplifying, we can write the derivative as 
\begin{equation*}
\begin{aligned}
    \Delta &= 2 \left(\mathbb{E}[b_t^2] + \mathcal{E}^4 \theta_t^2 (\theta_t - \theta^*) - \mathbb{E}[o_t b_t] + \mathcal{E}^4 \theta_t (\theta_t - \theta^*)^2 \right).
\end{aligned}
\end{equation*}

The derivative of $\mathbb{E}[(\widehat{y}_t - y_t)]^2$ which does \emph{not} account for $x_t$ and $y_t$'s dependence on $\vtheta_t$ can be written as 

\begin{equation*}
\begin{aligned}
    \Delta' &= 2 \left(\mathbb{E}[(\widehat{y}_t - y_t) x_t] \right)\\
    &= 2 \left(\mathbb{E}[b_t^2] + \mathcal{E}^4 \theta_t^2 (\theta_t - \theta^*) - \mathbb{E}[o_t b_t] \right).
\end{aligned}
\end{equation*}

As can be seen by comparing the two equations, there is an extra $\mathcal{E}^4 \theta_t (\theta_t - \theta^*)^2$ term present in $\Delta$ that is not in $\Delta'$. This can cause $\Delta$ and $\Delta'$ to have opposite signs under certain scenarios, e.g. when $\mathbb{E}[b_t^2] + \mathcal{E}^4 \theta_t^2 (\theta_t - \theta^*) - \mathbb{E}[o_t b_t]$ is negative and $\mathcal{E}^4 \theta_t (\theta_t - \theta^*)^2$ is sufficiently large. To generate Figure \ref{fig:pp-vs-us}, we set $\mathbb{E}[b_t] = 0$, $\mathbb{E}[b_t^2] = 0.3$, $\mathbb{E}[o_t] = 0$, $\mathbb{E}[g^2_t] = 15$, $\mathbb{E}[o_t b_t] = -6.5$, $\mathcal{E} = 3$, $\theta^* = 1$, and $\theta_0 = 0.5$. To generate Figure \ref{fig:SGD-conv-invex} and \ref{fig:SSGD-conv-invex}, we changed $\theta^*$ to be $0.7$.

\begin{figure}
    \centering
    \includegraphics[width=0.5\linewidth]{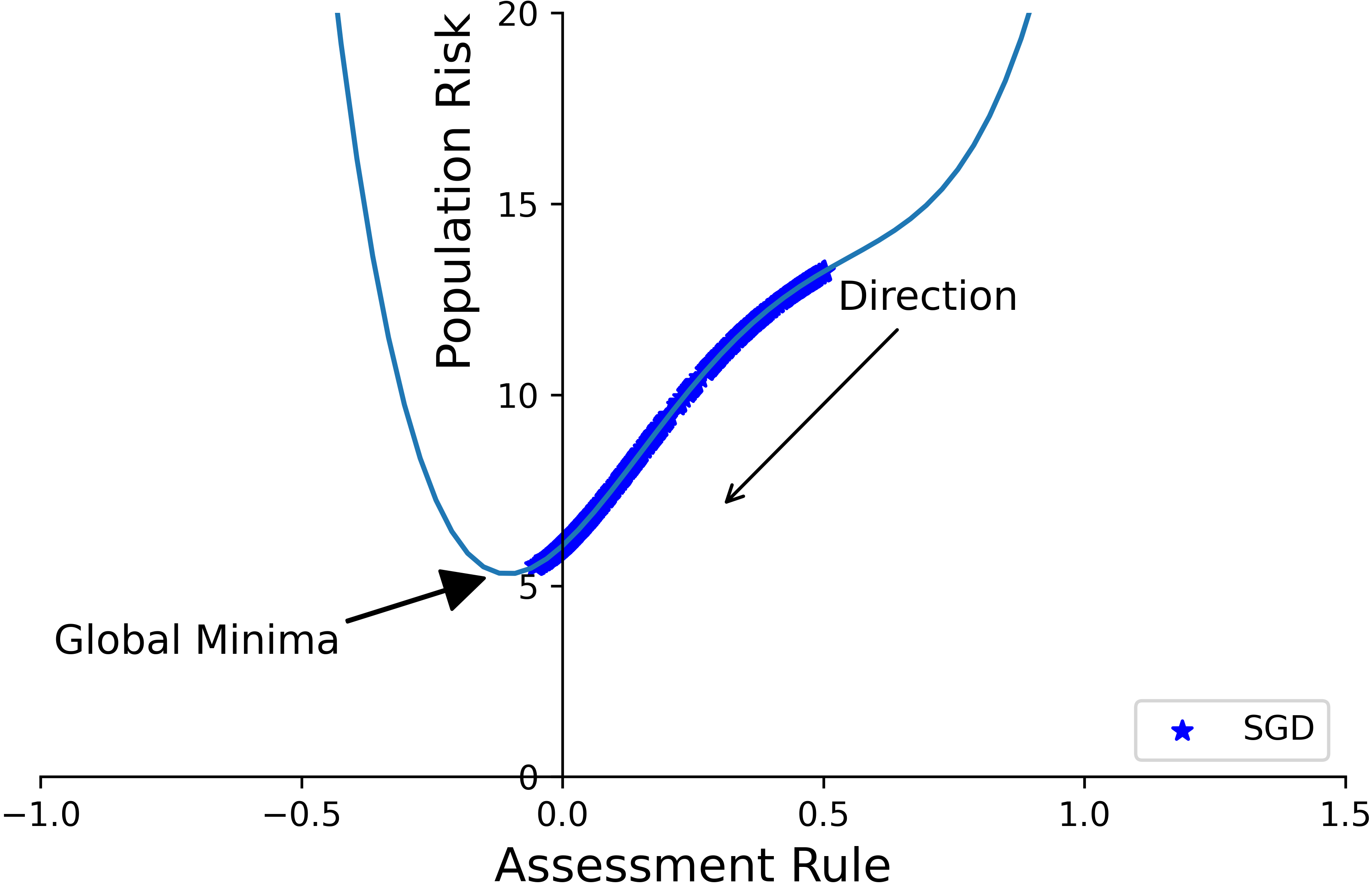}
    \caption{SGD with invex function}
    \label{fig:SGD-conv-invex}
\end{figure}

\begin{figure}
    \centering
    \includegraphics[width=0.5\linewidth]{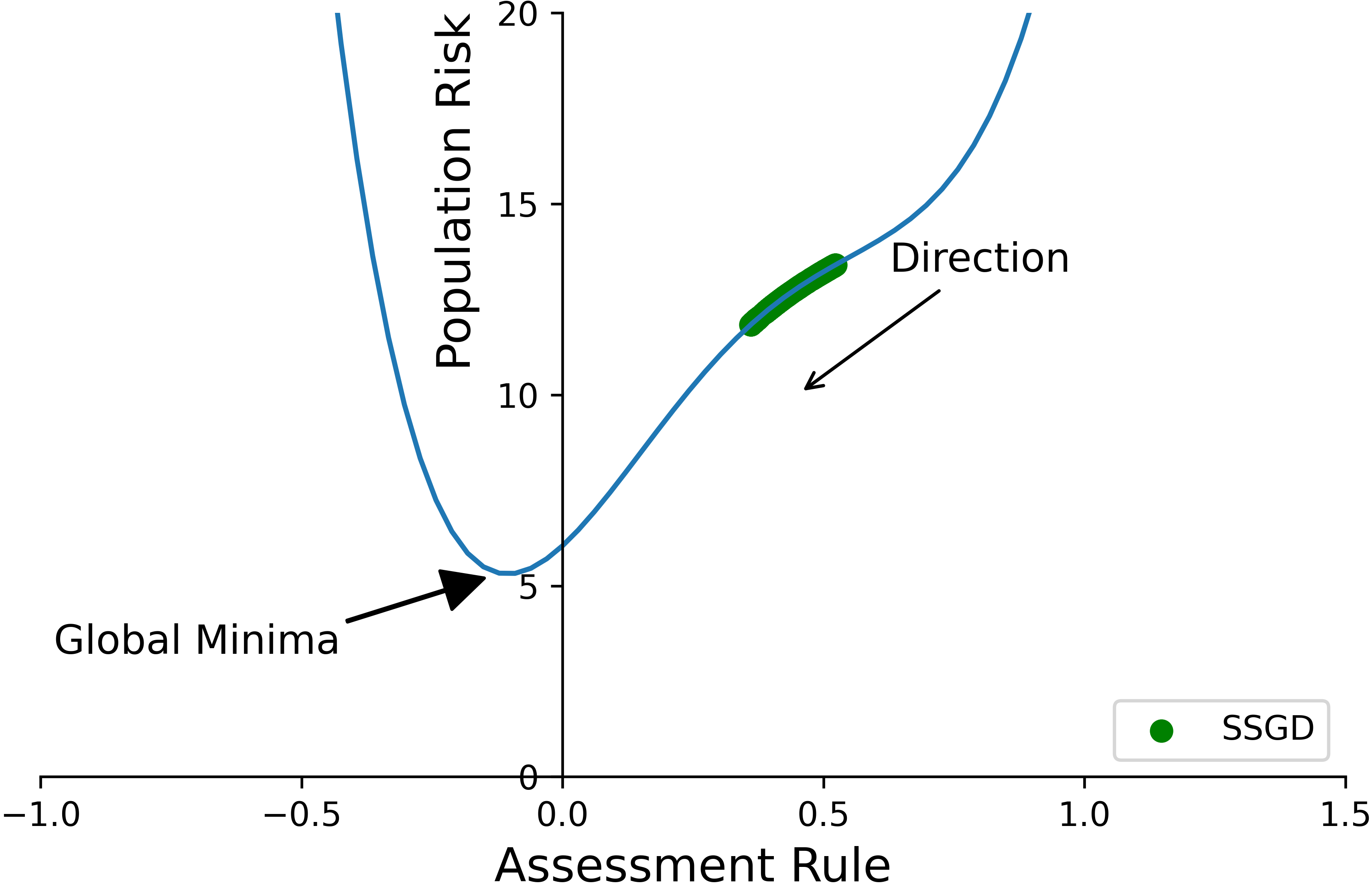}
    \caption{SSGD with invex function}
    \label{fig:SSGD-conv-invex}
\end{figure}

\begin{figure}
    \centering
    \includegraphics[width=0.5\textwidth]{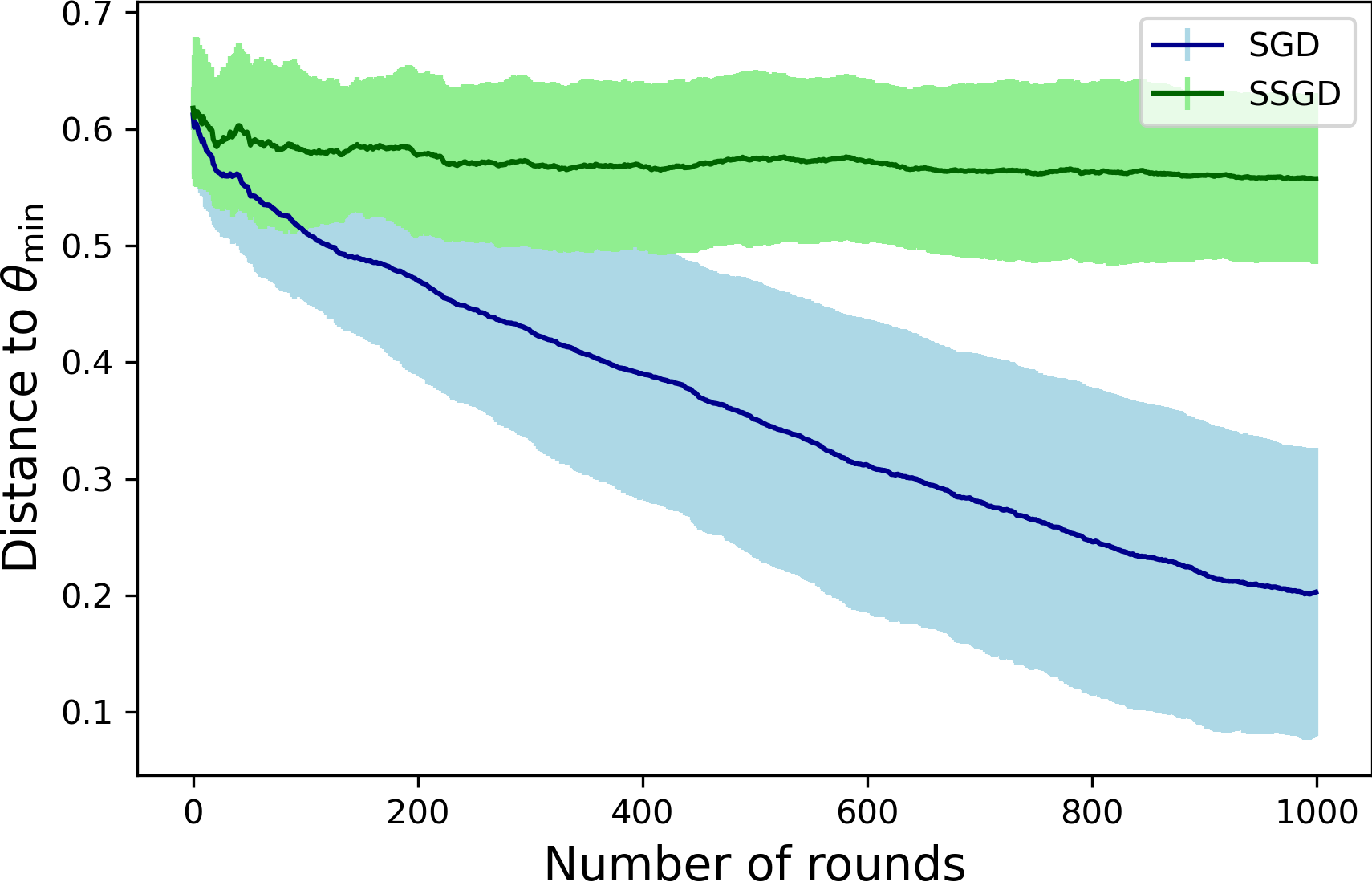}
    \caption{Convergence rate of Stochastic Gradient Descent vs Simple Stochastic Gradient Descent for simple 1D setting. Even when SSGD converges, it may do so at a much slower rate, due to the inexact measure of the gradient. We ran both methods for $1000$ time-steps with a decaying learning rate of $\frac{0.001}{\sqrt{T}}$. Results are averaged over 10 runs, with the error bars (in lighter colors) representing one standard deviation.}
    \label{fig:pp-vs-us-convergence-rate}
\end{figure}

\iffalse
\begin{figure}
    \centering
     \begin{subfigure}[t]{0.45\textwidth}
         \centering
         \includegraphics[width=\textwidth]{figures/SGD_convergence_plot_invex.png}
         \caption{SGD with invex function. We ran SGD for $1000$ time-steps with a learning rate of $0.001$.}
         \label{fig:SGD-conv-invex}
     \end{subfigure}
     ~
     \centering
     \begin{subfigure}[t]{0.45\textwidth}
         \centering
         \includegraphics[width=\textwidth]{figures/SGD_PP_convergence_plot_invex.png}
         \caption{SSGD with invex function. Even when SSGD converges (right), it may do so at a much slower rate, due to the inexact measure of the gradient. We ran SSGD for $1000$ time-steps with a learning rate of $0.001$.}
         \label{fig:SSGD-conv-invex}
     \end{subfigure}
    \caption{Even when SSGD converges (right), it may do so at a much slower rate, due to the inexact measure of the gradient. We ran both methods for $1000$ time-steps with a learning rate of $0.001$.}
    \label{fig:pp-vs-us-3}
\end{figure}
\fi

%%%%%%%%%%%%%%%%%%%%%%%%%%%%%%%%%%%%%%%%%%%%%%%%%%%%%%%%%%%%%%%%%%%%%%%%%%%%%%%
%%%%%%%%%%%%%%%%%%%%%%%%%%%%%%%%%%%%%%%%%%%%%%%%%%%%%%%%%%%%%%%%%%%%%%%%%%%%%%%

\end{document}